\DeclareMathOperator*{\argmax}{argmax}
\newcommand{\vast}{\bBigg@{4}}
\newcommand{\Vast}{\bBigg@{5}}
\newtheorem{theorem}{Theorem}
\newtheorem{lemma}{Lemma}
\newtheorem{corollary}{Corollary}
\begin{document}
\title{Filtered Poisson Process Bandit on a Continuum}
\author[1]{James A. Grant\thanks{j.grant@lancaster.ac.uk; corresponding author}}
\author[2]{Roberto Szechtman\thanks{rszechtm@nps.edu}}
\affil[1]{Department of Mathematics and Statistics, Lancaster University, UK}
\affil[2]{Department of Operations Research, Naval Postgraduate School, CA, USA}
\maketitle

\onehalfspacing
\maketitle

\abstract{
We consider a version of the continuum armed bandit where an action induces a filtered realisation of a non-homogeneous Poisson process. Point data in the filtered sample are then revealed to the decision-maker, whose reward is the total number of revealed points. Using knowledge of the function governing the filtering, but without knowledge of the Poisson intensity function, the decision-maker seeks to maximise the expected number of revealed points over $T$ rounds. We propose an upper confidence bound algorithm for this problem utilising data-adaptive discretisation of the action space. This approach enjoys $\tilde{O}(T^{2/3})$ regret under a Lipschitz assumption on the reward function. We provide lower bounds on the regret of any algorithm for the problem, via new lower bounds for related finite-armed bandits, and show that the orders of the upper and lower bounds match up to a logarithmic factor.  
}

\noindent
\textbf{Keywords}: Applied Probability; Poisson Processes; Multi-Armed Bandit; Machine Learning

\section{Introduction}

The challenge of detecting interesting events, using limited resources, arises in numerous settings. In a defence context, surveillance teams wish to observe suspicious activity or gain intelligence. In ecological and environmental data collection, scientists wish to observe behaviours of endangered species or record notable measurements of environmental variables. In manufacturing and logistics settings, it is desirable to observe faults in machine operation or a supply chain. 

However, in all of these settings, practitioners may face the problem of having insufficient resource to observe everything they wish to, and must optimise their resource allocation to maximise the detection of events. In these settings ``resource'' may refer to human searchers, fixed or mobile sensors, cameras, or a variety of other equipment with a capacity to observe events of interest.

Two factors play a particularly important role in the rate of detection. Crudely put, these are where we look, and how good we are at looking. In any of these settings, we can only expect to observe events in locations (spatial or temporal) where we deploy resource. Further, the precision of the detection may also be affected by how resource is deployed. If resource is spread over a large region, the probability of detecting events within this region may be lower than if focused on a small area.

Inspired by these challenges, we consider a stylised model of resource allocation which captures the challenge of balancing coverage and detection probability. This framework is sufficiently abstract to model problems across the various aforementioned applications and beyond.

Consider a decision-maker who aims to detect the maximum number of events occurring according to a Non-homogeneous Poisson process (NHPP) on a segment $[0,1]$. The decision-maker selects a point $y \in [0,1]$ and then sweeps the sub-segment $[0,y]$ searching for events. However, the decision-maker's search is imperfect, in that events in $[0,y]$ are detected, independently of each other, with \emph{filtering probability} $\gamma(y)$, where $\gamma: [0,1]\rightarrow [0,1]$, is a known, nonincreasing function. The expected number of events detected by the decision-maker on a single sweep is then determined by the filtering probability, and the cumulative intensity function (CIF) of the NHPP, \begin{displaymath}
\Lambda(y)=\int_0^y \lambda(z)dz, \enspace \forall ~y \in [0,1]
\end{displaymath} where $\lambda:[0,1]\rightarrow \mathbb{R}$ is the rate function of the NHPP. Given the decision-maker chooses to sweep $[0,y]$, the number of events detected has a Poisson$(\Lambda(y)\gamma(y))$ distribution.

Figure \ref{fig::illustration} illustrates this process. An example intensity function $\lambda$ is represented by the blue curve and a function $\gamma$ giving the filtering probability is given by the black curve. The blue points towards the bottom of the left pane illustrate a single sample of events from the NHPP with intensity $\lambda$. The decision-maker selects $y=0.6$ and sweeps the sub-segment $[0,0.6]$, detecting each event therein with probability $\gamma(0.6)$. The red piecewise-constant function in the right pane illustrates the effective filtering probability over $[0,1]$. The points plotted in red then represent the events actually detected by the decision-maker during their imperfect search - which we observe are a subset of the events that actually arose.

In this paper, we consider a sequential variant of this problem, where the CIF, $\Lambda$, is unknown to the decision-maker, but the choice of endpoint $y$ can be updated over a series of rounds, in response to observing the locations of detected events in previous rounds. The decision-maker's aim is then to maximise the expected number of detected events over $T \in \mathbb{N}$ rounds. The study of this problem is motivated both by its theoretical challenge and its practical interest. 

Versions of this problem may arise in a number of settings such as ecological surveillance, defence, and logistics, where sightings of endangered species, criminal activity, or machine faults may for instance comprise the events of interest. As a motivating, and sufficiently general example, consider a scenario where observations are made by searchers (representing cameras, sensors, robotic and human searchers, etc.), that must restart at the same point after each round. We note that while in the material that follows we will treat the line segment as indexing space (for clarity and consistency), it could equivalently be thought of as indexing time or space-time and apply to a yet broader range of examples.

From a theoretical perspective, the problem is closely related to the one-dimensional case of the stochastic continuum-armed bandit (CAB) problem \citep{agrawal1995continuum}. This is a sequential decision-making problem where in each of a series of rounds $t \in [T] \equiv \{1,\dots,T\}$, a decision-maker selects an action $x_t \in [0,1]$ and receives a reward, which is a noisy realisation of some unknown smooth function $f:[0,1] \rightarrow [0,1]$ evaluated at $x_t$. The decision-maker's aim is to maximise the expected sum of rewards amassed over $T$ rounds. To realise this aim, the decision-maker must deploy a strategy which appropriately balances between exploring the action space $[0,1]$ to learn the function $f$, and exploiting this information, selecting actions known to produce larger rewards to maximise the cumulative total. 

In the Poisson process-based problem at hand, a similar dilemma arises, we lack knowledge of the filtered CIF - which corresponds to the reward function - and can only hope to maximise the sum of rewards by exploring the action space - i.e. choosing a range of endpoints $y \in [0,1]$. However, the feedback received on actions in our problem is much richer than in the standard CAB problem. In addition to a noisy realisation of the filtered CIF, $\Lambda\gamma$, we observe the location of detected events, which can help with the estimation of the reward function beyond the inferences from smoothness properties alone. Methods for the standard CAB problem are therefore inappropriate for the problem we face, as is the existing unmodified theory. In this paper we present a specific treatment of the previously described sequential endpoint selection problem, which we henceforth refer to as a Filtered Poisson Process Bandit (FPPB), deriving a bespoke decision-making algorithm and theoretical analysis of the problem.

\begin{figure}
\centering
\includegraphics[width=0.9\textwidth]{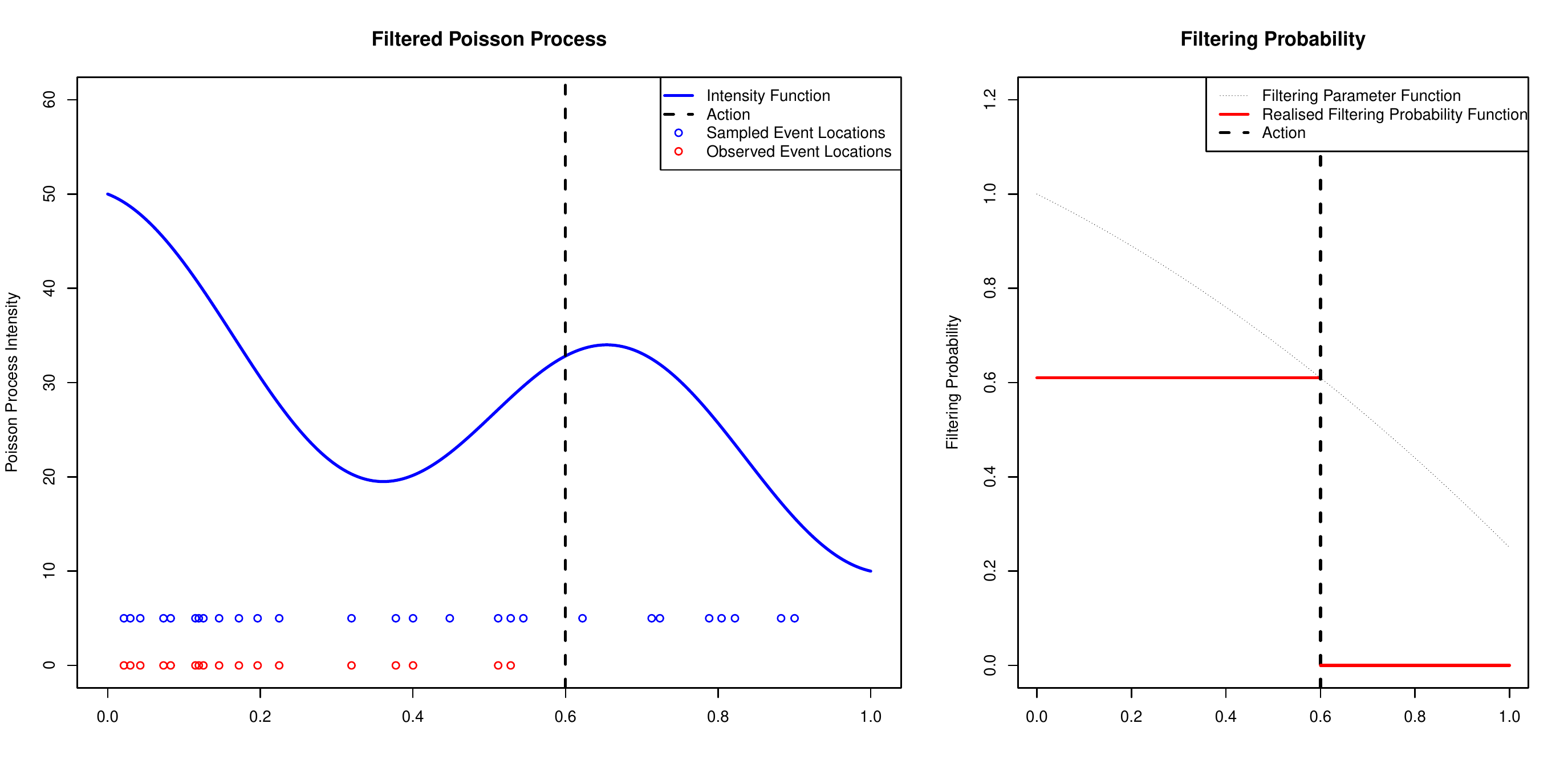}
\caption{Graphical representation of the filtering process.}
\label{fig::illustration}
\end{figure}

\subsection{Related Literature}
Sequential decision-making problems on continuous action spaces have been studied extensively, following from initial works of \cite{agrawal1995continuum} and \cite{kleinberg2005nearly}. Most successful strategies have employed a combination of adaptive discretisation of the action space, and optimism in the face of uncertainty. Our approach for the FPPB problem, also uses these techniques. 

Adaptive discretisation, as used in the ``Zooming" algorithm of \cite{kleinberg2008multi} and ``hierarchical online optimisation" (HOO) algorithm of \cite{bubeck2011x}, reduces the available action space in round $t$ to some $\mathcal{A}_t \subset [0,1]$. Restricting the action set ensures exploration occurs at a predictable rate, and makes the action selection more straightforward. Gradually, as the rounds proceed and more information is gathered, $\mathcal{A}_t$ is increased, usually in a data-adaptive fashion to permit choice from a more granular set of actions. Intuitively, this is also appealing, as when estimates of the reward are very crude, there is little motivation to make decisions at a very granular level.

Optimistic approaches are those which encourage an appropriate balance of exploration and exploitation by making decisions with respect to high probability upper confidence bounds (UCBs) on the expected reward of the available actions. The Zooming and HOO algorithms both calculate UCBs for the reward of available actions in each round and select the action with the largest UCB. These approaches were the first to achieve order optimal performance, in terms of regret, for this class of problems.

Strong results have also been obtained by approaches which use Gaussian processes and avoid discretisation of the action space. The GP-UCB (Gaussian Process - Upper Confidence Bound) algorithm of \cite{srinivas2010gaussian} constructs an upper confidence bound on the reward function over all actions, rather than at specific points, and selects the action which maximises this UCB function. This method also has order optimal performance guarantees, but with respect to a Bayesian measure of regret, rather than the frequentist one used in the analysis of the Zooming and HOO algorithms.

It is worth noting that none of these algorithms can sensibly be applied to the FPPB, and that their theoretical guarantees do not carry to the FPPB problem. Principally, this is because they lack a means to handle the additional feedback in terms of the location data, but a more subtle point is that without modification, these methods are not suited to unbounded rewards, as we have in this setting, with the Poisson distributed reward.

\cite{grant2020adaptive} consider a filtered Poisson bandit problem which is similar in some senses to ours, but theirs employs a fixed discretisation of the action space, such that the spatial locations of the events are irrelevant. They focus instead on the challenges of choosing multiple non-overlapping sub-segments and analyse performance with respect to the best possible action among a fixed discrete set. \cite{grant2019adaptive} considers a continuous action space, but without filtering of the observations. Inference is therefore more straightforward in this setting, and the Thompson Sampling method proposed is not applicable to the FPPB setting. Recently, \cite{lu2019optimal} provide an algorithm combining the adaptive discretisation of \cite{kleinberg2008multi} and heavy tailed UCBs of \cite{bubeck2013bandits} for a version of the CAB problem with heavy-tailed reward noise distributions. While the Poisson does fit in to this class of distributions, it also enjoys tighter bespoke concentration results, and a general heavy-tailed approach is overly conservative for the FPPB - even if event locations were not observed.

\subsection{Key Contributions and Structure}
The main contribution is a UCB algorithm with $\tilde O(T^{2/3})$ regret over $T$ rounds. By derivation of a lower bound, we show that under the assumptions on the CIF, this is optimal up to a logarithmic factor. From the methodological viewpoint, we extend the Lipschitz multi-armed bandit framework \citep{kleinberg2008multi} to deal with a filtered Poisson process on continuum. 

The remainder of the paper is structured as follows. In Section \ref{sec::model} we precisely state the problem of interest. In Section \ref{sec::algorithm} we present our UCB approach to the problem. Sections \ref{sec::uppbound} and \ref{sec::lowbound} provide the upper and lower bounds on regret respectively. We conclude with a simulation of our method in Section \ref{sec::experiments}, and discussion in Section \ref{sec::discussion}.

\section{Model} \label{sec::model}
The formal specification of the FPPB problem is as follows. In rounds $t\in[T]$, the decision-maker selects an endpoint $y_t \in [0,1]$ and makes an observation on the sub-segment $[0,y_t]$. The environment generates a realisation of the NHPP with CIF $\Lambda$, consisting of an increasing sequence of event locations $\{X_{t,1},X_{t,2},\ldots,X_{t,N_t}\} \in [0,1]^{N_t}$, where $ N_t\sim$ Poisson$(\Lambda(1))$. The end-point selected by the decision-maker implies a filtering probability $\gamma(y_t) \in [0,1]$, such that events to the left of $y_t$ are detected independently of each other with probability $\gamma(y_t)$, and all events to the right of $y_t$ are not detected. As a result, a sequence of i.i.d. Bernoulli($\gamma(y_t)$) random variables, $B_1,B_2,\ldots,B_{N_t}$,
is generated. The decision maker receives the count of detected events $R_t \equiv R_t(y_t) = \sum_{k=1}^{N_t} \mathbbm{1}(B_{t,k}=1, X_{t,k}\leq y_t)$ as a reward, and observes the locations of detected events $X_{t,k}$ with $B_{t,k}=1$ and $X_{t,k}\leq y_t$. By construction, $R_t\sim$ Poisson$(\Lambda(y_t)\gamma(y_t))$.

The decision-maker's objective is to maximise the sum of rewards obtained over $T$ rounds, $\sum_{t=1}^T R_t$. To realise this objective we aim to determine a \emph{policy}, $A$, which maps from a history of actions and observations to a next action, which maximises the expected reward, or equivalently minimises the \emph{regret}, \begin{equation}
    Reg_A(T) = \mathbb{E}\left(\sum_{t=1}^T R_t(z^*)-R_t(y_t) \right),
\end{equation} where $z^* \in \argmax_{y \in [0,1]} \Lambda(y)\gamma(y)$ is an optimal endpoint which maximises the expected per-round reward. Here the expectation is with respect to both the random process governing the generation and filtering of events and the decision-maker's actions. We will be interested in upper bounding the regret as a function of $T$ for our proposed algorithm, and comparing the order of this upper bound to that of lower bounds on the best achievable regret of any algorithm.

Bounded regret is achievable only if the reward function is suitably well-behaved as to admit learning from a finite sample of observations. This is ensured through assumptions on the form of the CIF and filtering function. These assumptions, enforced throughout the paper, are Lipschitz continuity of the filtered CIF and a rate bound,
\begin{align*}
\text{A1: }&|\gamma(y)\Lambda(y)-\gamma(x)\Lambda(x)|\leq m |y-x|, \forall x,y\in [0,1],\\
\text{A2: }&\lambda(y)\leq \lambda_{\max},
\end{align*}
for $m,\lambda_{\max}\geq 0$ known and finite. Assumptions A1--A2 are used to bound the estimation error for the expected number of detected events in each cell; this can be achieved by including in the cell index an additive term proportional to the cell length. We also assume that $\gamma_{\min}=\inf_{y\in(0,1]}\{\gamma(y)>0\}>0$; this is without loss of generality,  as segments with $\gamma(\cdot)=0$ do not contain the optimal endpoint.

\section{Algorithm} \label{sec::algorithm}
In this section we present our algorithm for the FPPB problem, CIF-UCB, given as Algorithm \ref{alg::SE}. 

\begin{algorithm}[hp]
   \caption{CIF-UCB (Cumulative Intensity Function - Upper Confidence Bound)}
   	\label{alg::SE}
\begin{algorithmic}[1]
   \STATE {\bfseries Input:} Rate bound $\lambda_{\max}$, filtering probabilities $\gamma(\cdot)$, Lipschitz constant $m$, active cell set $\mathcal A_1 = \{(0,1]\}$, effective number of samples $V_1(0,1)=\emptyset$, index $\mathcal I_1(0,1)=m$.
   \FOR{$t=1$ {\bfseries to} $T$}
	\STATE \underline{\emph{Selection Rule:}} 
	\STATE Find cell \begin{displaymath}
			(a_t,b_t]=\argmax_{(x,y]\in \mathcal A_{t}}\mathcal I_t(x,y),
			\end{displaymath}
			breaking ties randomly.
	\STATE Do a sweep up to $b_t$.
	\STATE  Update $V_{t+1}(a_t,b_t) = V_{t}(a_t,b_t)\cup \{t\}$, and 
		\begin{align*}
		\zeta_{t+1}(b_t)&=\frac{6\max\{1,\lambda_{\max}\}\log(T)}{\sum_{i=1}^{|V_{t+1}(a_t,b_t)|}\gamma(b_{\tau_i})}  +\sqrt{\frac{6\lambda_{\max}\log(T)}{\sum_{i=1}^{|V_{t+1}(a_t,b_t)|}\gamma(b_{\tau_i})}}.
		\end{align*}
		 
	\STATE Update $\bar \Lambda_{t+1}(b_t)$ as in (\ref{eq:las1}).
		 
	\STATE \underline{\emph{Division Rule:}}
		\IF {$m(b_t-a_t)\geq \zeta_{t+1}(b_t)$} 
		
		\STATE Update the active cell set $\mathcal A_{t+1}=\mathcal A_t \setminus \{(a_t,b_t]\} \cup \{(a_t,(a_t+b_t)/2],((a_t+b_t)/2,b_t]\}$. 
		 \STATE Set $V_{t+1}((a_t,(a_t+b_t)/2)=V_{t+1}(((a_t+b_t)/2,b_t)=V_{t+1}(a_t,b_t)$, and
		\begin{align*}
		\zeta_{t+1}\left(\frac{a_t+b_t}{2}\right) & = \frac{6\max\{1,\lambda_{\max}\}\log(T)}{\sum_{i=1}^{|V_{t+1}(a_t,(a_t+b_t)/2)|}\gamma(b_{\tau_i})}  +\sqrt{\frac{6\lambda_{\max}\log(T)}{\sum_{i=1}^{|V_{t+1}(a_t,(a_t+b_t)/2)|}\gamma(b_{\tau_i})}}, \\
		\zeta_{t+1}(b_t) & = \frac{6\max\{1,\lambda_{\max}\}\log(T)}{\sum_{i=1}^{|V_{t+1}((a_t+b_t)/2,b_t)|}\gamma(b_{\tau_i})}  +\sqrt{\frac{6\lambda_{\max}\log(T)}{\sum_{i=1}^{|V_{t+1}((a_t+b_t)/2,b_t)|}\gamma(b_{\tau_i})}}.
		\end{align*}
		
		\STATE Define $\bar\Lambda_{t+1}((a_t+b_t)/2)$ and $\bar\Lambda_{t+1}(b_t)$ as in (\ref{eq:las1}).
		\ENDIF
		\STATE \underline{\emph{UCB Computation:}}
	   \STATE Set $\mathcal I_{t+1}(x,y)=\gamma(y) \bar\Lambda_{t+1}(y)+m(y-x)+\gamma(y)\zeta_{t+1}(y)$ for all cells $(x,y]\in \mathcal A_{t+1}$.
   \ENDFOR
\end{algorithmic}
\end{algorithm}
At a high level, CIF-UCB proceeds as follows. For each round $t=1,\ldots,T$, the algorithm maintains a set of active cells, $\mathcal A_t$, which form a partition of $[0,1]$. An index, $\mathcal{I}_{t}$, taking the form of optimistic estimate of the expected reward, is computed for each cell in $\mathcal{A}_t$. The algorithm selects the right endpoint of the active cell with largest index as the action for that round. Initially, the active set contains the unit interval, $\mathcal A_1 = \{(0,1]\}$, so that the algorithm does a complete sweep in the first round. If the number of sweeps of a cell exceeds some threshold in relation to its length, the cell is split in half. Hence, active cells make up a partition of the interval $[0,1]$ for all rounds. A new cell inherits the  number of sweeps and detection count that fall in its interval from the parent cell. 

Accumulating rewards over the interval to the left of the selected endpoint makes the problem structure combinatorial in nature, which poses a challenge for the analysis. The insight that makes the analysis tractable is that, by the independent increment property of the Poisson process, the filtered Poisson counts corresponding to the active cells that lie to the left of the endpoint selected by the algorithm in each round are independent. This leads to a CIF estimator for each active cell with tight error bounds.

We complete the notation needed to define the CIF estimator. Let $\{\mathcal F_t\}_{t=1}^{T}$ be the filtration induced by the sequence of event locations and cell selections $((a_t,b_t])_{t=1}^T$. Also, let  $$V_t(x,y)=\{\tau_1,\tau_2, \ldots\} \subseteq [t]$$ be the collection of (random) times when active cell $(x,y]$ is swept by round $t$ and let, $$Z_{\tau_i}(y)=\sum_{k=1}^{N_{\tau_i}}\mathbbm{1}(B_{\tau_i,k}=1, X_{\tau_i,k}\leq y)$$ be the filtered Poisson count to the left of $y$ in round $\tau_i$. Finally, let $\sum_{i=1}^{|V_t(x,y)|} Z_{\tau_i}(y)$ be the total filtered Poisson count to the left of $y$ over the rounds when cell $(x,y]$ is swept. When the context is clear, we write $V$ in lieu of $V_t(x,y)$

For active cell $(x,y]$, $\Lambda(y)$ is estimated by dividing the cumulative filtered Poisson counts up to $y$ by its effective number of sweeps by round $t$,
\begin{equation}\label{eq:las1}
\bar\Lambda_t(y) =\frac{\sum_{i=1}^{|V_t(x,y)|} Z_{\tau_i}(y)}{\sum_{i=1}^{|V_t(x,y)|}\gamma(b_{\tau_i})}.
\end{equation}
Essentially, in (\ref{eq:las1}) the filtered Poisson count is \emph{unfiltered} by dividing it by $\sum_{i=1}^{|V_t(x,y)|}\gamma(b_{\tau_i})$. It's easy to see that $\bar\Lambda_t(y)$ is an unbiased estimator of $\Lambda(y)$. 

CIF-UCB samples from the origin to the endpoint of the active cell with largest index, and divides the latter cell if its length exceeds certain threshold. The complexity of the CIF-UCB is $O(T)$ for the variable updates, and $O(\sum_{t=1}^T t\log t)= O(T^{2}\log T)$ for sorting the indices, since there are at most $t$ active cells by round $t$.


\section{Upper Bound on Regret} \label{sec::uppbound}
In this section we present the first of our main theoretical contributions, an upper bound on the regret of CIF-UCB.

\begin{theorem} \label{thm:upperbound}
The regret of CIF-UCB applied to the FPPB problem, with CIF and filtering function satisfying Assumptions A1 and A2 satisfies $$Reg(T) =  \tilde O(T^{2/3}).$$

\end{theorem}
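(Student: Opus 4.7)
My plan is to follow the adaptive-discretisation UCB template of Kleinberg--Slivkins--Upfal, modified for the filtered Poisson feedback structure. First I would define a high-probability good event $\mathcal{G}$ on which the concentration
\[
|\bar\Lambda_t(y) - \Lambda(y)| \leq \zeta_t(y)
\]
holds simultaneously for every round $t \leq T$ and every cell $(x,y]$ active by round $t$. The key tool is a Bernstein-type Poisson tail bound: conditional on the random sequence $(b_{\tau_i})_{i=1}^{|V_t(x,y)|}$ of sweep endpoints that covered $(x,y]$, the independent-increments property of the NHPP makes $\sum_{i=1}^{|V_t(x,y)|} Z_{\tau_i}(y)$ Poisson with mean $\Lambda(y)\sum_i \gamma(b_{\tau_i})$. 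Applying Bernstein at failure probability $T^{-c}$ for suitable $c$, then union-bounding over at most $T$ rounds, at most $T$ cells ever created, and the $T$ possible values of $|V_t|$, reproduces exactly the two-term form of $\zeta_t(y)$ built into the algorithm.

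Next, on $\mathcal{G}$, I would verify the UCB property. For the active cell $(x^*, y^*)$ containing $z^*$, Assumption A1 gives $\gamma(z^*)\Lambda(z^*) \leq \gamma(y^*)\Lambda(y^*) + m(y^*-x^*)$, while concentration gives $\gamma(y^*)\Lambda(y^*) \leq \gamma(y^*)[\bar\Lambda_t(y^*) + \zeta_t(y^*)]$. Summing yields $\mathcal{I}_t(x^*,y^*) \geq \gamma(z^*)\Lambda(z^*)$, and since CIF-UCB picks the cell of largest index, $\mathcal{I}_t(a_t,b_t) \geq \gamma(z^*)\Lambda(z^*)$. Using the other direction of concentration, $\bar\Lambda_t(b_t) \leq \Lambda(b_t) + \zeta_t(b_t)$, the per-round regret on $\mathcal{G}$ becomes
\[
\gamma(z^*)\Lambda(z^*) - \gamma(b_t)\Lambda(b_t) \leq 2\gamma(b_t)\zeta_t(b_t) + m(b_t-a_t) \leq 3\zeta_t(b_t),
\]
where the last inequality uses that $(a_t,b_t]$ was not split in an earlier round, so $m(b_t - a_t) < \zeta_t(b_t)$.

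To sum per-round regret, I would organise cells by their dyadic depth $k$ (length $2^{-k}$). A cell at depth $k$ is split once its effective sample count $n$ makes $\zeta_t(y) \lesssim m \cdot 2^{-k}$; using $\gamma(b_{\tau_i}) \geq \gamma_{\min}$, this requires $n_k = \Theta(\log T / (\gamma_{\min} m^2 4^{-k}))$. Since at most $2^k$ cells coexist at depth $k$ and the total number of plays is $T$, the deepest depth $D$ reached satisfies $2^{3D} = \tilde{O}(T)$, i.e.\ $2^D = \tilde{O}(T^{1/3})$. The total regret on $\mathcal{G}$ then telescopes to
\[
\sum_{t=1}^T 3\zeta_t(b_t) \lesssim m \sum_{k=0}^D 2^k n_k \cdot 2^{-k} = \tilde{O}(m \cdot 2^{2D}) = \tilde{O}(T^{2/3}),
\]
and $\bar{\mathcal{G}}$ contributes only $O(1)$ in expectation via Poisson moment bounds on $R_t(z^*) - R_t(y_t)$ combined with $\Pr(\bar{\mathcal{G}}) = O(T^{-c})$.

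The main obstacle will be making Step 1 rigorous in the presence of the adaptively chosen, random stopping times $V_t(x,y)$ and sweep endpoints $b_{\tau_i}$. Two clean resolutions are (i) a union bound over the at most $T$ possible realisations of $|V_t|$, or (ii) a time-uniform Freedman-style martingale argument applied to $\sum_i[Z_{\tau_i}(y) - \Lambda(y)\gamma(b_{\tau_i})]$. Once this conditioning step is clean, the remaining work is essentially the standard zooming analysis, specialised to the Poisson rate setting and leaning on the NHPP's independent increments to supply independence across the active cells within any single sweep, which is what cleanly decouples the per-cell estimation problem.
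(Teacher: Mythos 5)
Your overall architecture matches the paper's: a Freedman/Bernstein concentration bound for the unfiltered estimator $\bar\Lambda_t(y)$ reproducing the two-term form of $\zeta_t(y)$ (the paper's Lemma \ref{lemma:1}, proved exactly via the martingale route you list as resolution (ii)); the optimism argument $\gamma(z^*)\Lambda(z^*)\le \mathcal I_t(u_t,v_t)\le \mathcal I_t(a_t,b_t)$ and the resulting per-round regret bound (Lemmas \ref{lemma:important} and \ref{lemma:important2}); and a dyadic accounting of cells by length. However, your final counting step has a genuine gap. You claim the deepest depth $D$ reached satisfies $2^{3D}=\tilde O(T)$ ``since at most $2^k$ cells coexist at depth $k$ and the total number of plays is $T$.'' That inference is valid only if the discretisation tree is complete down to depth $D$, and the algorithm's tree is adaptive, so it need not be. A single branch can be driven to depth $D$ using only $\sum_{j\le D}\tilde\Theta(4^j)=\tilde\Theta(4^D)$ plays (each ancestor at depth $j$ needs $\tilde\Theta(4^j)$ effective samples before it splits, and both children inherit those samples without further plays), so within $T$ rounds the maximum depth can satisfy $2^D=\tilde\Theta(\sqrt T)$, not $\tilde O(T^{1/3})$. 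Substituting that into your sum $\tilde O(m\,2^{2D})$ gives only a trivial $\tilde O(T)$ bound; your two ingredients --- a full layer of $2^k$ cells at every depth $k\le D$, and $8^D\lesssim T$ --- cannot both be assumed for an arbitrary run of the algorithm.

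The paper closes this by a threshold decomposition rather than a bound on the maximum depth. Fix a free parameter $\ell=2^{-k}$. Every round in which the selected cell has length at most $\ell$ contributes per-round regret $O(m\ell+m^2\ell^2)$ by Lemma \ref{lemma:important2}, regardless of how many such cells exist or how deep they sit, giving $O(Tm\ell)$ in total (Eq.\ (\ref{eq:dd1})). Cells of each length $2^{-i}>\ell$ number at most $2^i$ and are each played $\tilde O(4^i)$ times with per-play regret $O(m2^{-i})$, and summing over $i<k$ gives $\tilde O(4^{k})$ (Eq.\ (\ref{eq:ll2})). Choosing $2^k=T^{1/3}$ balances the two terms and yields $\tilde O(T^{2/3})$. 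Replacing your depth bound with this decomposition (or, equivalently, maximising your sum over all admissible tree shapes subject to the total play budget, whose optimum happens to be the complete tree --- but that must be proved, not assumed) repairs the argument; everything upstream of that point is sound and in line with the paper.
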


\begin{proof}
The proof has three main stages. We first bound the CIF estimator error for each active cell (Lemma \ref{lemma:1}), and then use the Lipschitz assumption to extend the bound to include all the points inside an active cell (knowing that one of these points is an optimal endpoint for some active cell; Corollary \ref{lemma:concbound1}). Second, we use the Division rule to express the confidence bound of each active cell in terms of its length (Lemma \ref{lemma:important}), which yields a bound for the per-round regret of the cell selected by the algorithm (Lemma \ref{lemma:important2}). Finally, we accumulate these per-round regrets to obtain an upper bound for the regret over $T$ rounds.

Firstly, we present the following concentration result, which asserts that the difference between the true CIF and the estimated CIF is unlikely to exceed the upper confidence terms used in Algorithm \ref{alg::SE}.

\begin{lemma}\label{lemma:1} Let $(x,y]$ be an active cell in round $t$. Then, 
\[
P\left(\left| \bar\Lambda_t(y)-\Lambda(y)\right|>\zeta_t(y)\right)  \leq 2 T^{-2},
\]
where
\[
\zeta_t(y)= \frac{6\log(T)\max\{1,\lambda_{\max}\}}{\sum_{i=1}^{|V_t(x,y)|}\gamma(b_{\tau_i})}+\sqrt{\frac{6\lambda_{\max}\log(T)}{\sum_{i=1}^{|V_t(x,y)|}\gamma(b_{\tau_i})}}.
\]
\end{lemma}

\begin{proof}
The Poisson count $Z_{\tau_i}(y)$ is $\mathcal F_{\tau_i}$ measurable and,
\[
E[Z_{\tau_i}(y)| \mathcal F_{\tau_{i-1}}] = \Lambda(y) \gamma(b_{\tau_i}), \text{ a.s.}
\]
Defining,
\[
M_k(y)=\sum_{i=1}^{k} (Z_{\tau_i} (y) - \Lambda(y) \gamma(b_{\tau_i})),
\]
it follows that $\{M_{k\wedge |V|}(y), \mathcal F_{\tau_k}\}_{k\geq 1}$ is a martingale, and $M_{k\wedge |V|}(y) - M_{(k-1)\wedge |V|}(y) =  (Z_{\tau_k} (y) - \Lambda(y) \gamma(b_{\tau_i}))\mathbbm{1}(k\leq |V|)$ is a martingale difference sequence. By Lemma 1 in \citep{grant2020adaptive}, 
\begin{align*}
P\left(\sum_{i=1}^{k\wedge |V|} (Z_{\tau_i} (y) - \Lambda(y) \gamma(b_{\tau_i})) >\eta\right) \leq \exp\left(-\frac{\eta^2}{2\Lambda(y)\sum_{i=1}^{|V|}\gamma(b_{\tau_i})+2\max\{1,\Lambda(y)\}\eta}\right).
\end{align*}
Solving for the r.h.s. above equal to $T^{-3}$ leads to,
%
%
\begin{align*}
\eta&=3\log(T)\max\{1,\Lambda(y)\} +\sqrt{(3\log(T)\max\{1,\Lambda(y)\})^2+6\Lambda(y)\log(T)\sum_{i=1}^{|V|}\gamma(b_{\tau_i})}\\
&\leq 6\log(T)\max\{1,\lambda_{\max}\}+\sqrt{6\lambda_{\max}\log(T)\sum_{i=1}^{|V|}\gamma(b_{\tau_i})}.
\end{align*}
It follows that the probability that
\begin{align*}
\sum_{i=1}^{k\wedge |V|} (Z_{\tau_i} (y) - \Lambda(y) \gamma(b_{\tau_i})) > 6\log(T)\max\{1,\lambda_{\max}\}  +\sqrt{6\lambda_{\max}\log(T)\sum_{i=1}^{|V|}\gamma(b_{\tau_i})},
\end{align*}
is at most $T^{-3}$ for each $k\leq T$. Taking a union bound over all $k\leq T$, and replacing for the definition of $\bar\Lambda_t(y)$ and $\zeta_t(y)$ results in
\[
P\left(\bar\Lambda_t(y)-\Lambda(y)>\zeta_t(y)\right)  \leq T^{-2}.
\]

Finally, using the same approach it can be shown that
\[
P\left(\bar\Lambda_t(y)-\Lambda(y)<-\zeta_t(y)\right)  \leq T^{-2},
\]
so the proof is complete.

\end{proof}

The Lipschitz assumption can be used to extend this to a high probability bound on the filtered CIF for active cells.

\begin{corollary}\label{lemma:concbound1}
Let $(x,y]\in \mathcal A_t$. Then, with probability at least $1-2T^{-2}$
\begin{align*}
&\sup_{x<c\leq y}\left|\gamma(y)\bar\Lambda_{t}(y) - \gamma(c)\Lambda(c) \right|\leq m(y-x) +\gamma(y) \zeta_t(y).
\end{align*}

\end{corollary}

\begin{proof}
By the Lipschitz assumption,
\[
\sup_{x<c\leq y}\left|\gamma(y)\Lambda(y) - \gamma(c)\Lambda(c) \right|< m(y-x).
\]
Hence,
\begin{align*}
P\big(\sup_{x<c\leq y}\left|\gamma(y)\bar\Lambda_{t}(y) - \gamma(c)\Lambda(c) \right|>m(y-x)+\gamma(y)\zeta_t(y)\big) \leq P(|\bar\Lambda_t(y)-\Lambda(y)|>\zeta_t(y)).
\end{align*}
\end{proof}

The index of a cell $(x,y]$ active in round $t$ is
\[
\mathcal I_t(x,y)=\gamma(y) \bar\Lambda_{t}(y)+m(y-x)+\gamma(y)\zeta_t(y).
\]
The $\gamma(y) \bar\Lambda_{t}(y)$ part of the index induces exploitation, while the $m(y-x)+\gamma(y)\zeta_t(y)$ term promotes exploration.

All the results that follow in this section are on the sample paths where 
\begin{equation}\label{eq:cc1}
\sup_{x<c\leq y}\left|\gamma(y)\bar\Lambda_{t}(y) - \gamma(c)\Lambda(c) \right|\leq m(y-x)+\gamma(y) \zeta_t(y)
\end{equation}
holds for all rounds $t=1,\ldots, T$. By Corollary \ref{lemma:concbound1}, the contribution to the regret of the sample paths that violate (\ref{eq:cc1}) is of order $O(1)$, after accounting for the $T$ rounds and up to $T$ cells by round $T$.

Our next result bounds the upper confidence term $\zeta_t$ for an active cell on the high probability event of Corollary \ref{lemma:concbound1}.

\begin{lemma}\label{lemma:important}
For $(x,y]\in\mathcal A_t$,
\[
 \zeta_t(y)\leq 4m^2(y-x)^2 \max\{1,1/\lambda_{\max}\}+2m(y-x).
\]
\end{lemma}

\begin{proof}
Let $V^{(p)}(x,y)$ be the set of rounds the parent cell of $(x,y]$ got swept. The Division rule for the parent cell implies
\[
2m(y-x)\geq \frac{6\max\{1,\lambda_{\max}\}\log(T)}{\sum_{i=1}^{|V^{(p)}(x,y)|}\gamma(b_{\tau_i})}+\sqrt{\frac{6\lambda_{\max}\log(T)}{\sum_{i=1}^{|V^{(p)}(x,y)|}\gamma(b_{\tau_i})}}.
\]
Then, we obtain the conservative lower bound,
\begin{equation}\label{eq:ll1}
\sum_{i=1}^{|V^{(p)}(x,y)|}\gamma(b_{\tau_i})\geq \frac{3\lambda_{\max}\log(T)}{2m^2(y-x)^2}.
\end{equation} 
Next we upper bound $\zeta_t(y)$,
\begin{align*}
\zeta_t(y)&\leq\frac{6\max\{1,\lambda_{\max}\}\log(T)}{\sum_{i=1}^{|V^{(p)}(x,y)|}\gamma(b_{\tau_i})}+\sqrt{\frac{6\lambda_{\max}\log(T)}{\sum_{i=1}^{|V^{(p)}(x,y)|}\gamma(b_{\tau_i})}}\\
&\leq 4m^2(y-x)^2 \max\{1,1/\lambda_{\max}\}+2m(y-x),
\end{align*} where the first inequality follows from  the definition of $\zeta_t(y)$, and the second inequality follows from the lower bound \eqref{eq:ll1}.

\end{proof}

Let $z^*$ be an optimal endpoint (i.e., $\gamma(z^*)\Lambda(z^*)\geq \gamma(y)\Lambda(y)$ for all $y\in[0,1]$), and $(u_t,v_t]\in \mathcal A_t$ the cell that contains $z^*$ in round $t$. The next result bounds the  regret $$\Delta(a_t,b_t)=\gamma(z^*)\Lambda(z^*)-\gamma(b_t)\Lambda(b_t),$$ incurred in each round in terms of the length of the cell selected by the algorithm.

\begin{lemma}\label{lemma:important2}
The round $t$ regret $\Delta(a_t,b_t)$ satisfies
\[
\Delta(a_t,b_t)\leq 8m^2(b_t-a_t)^2 \max\{1,1/\lambda_{\max}\}+5m(b_t-a_t).
\]
\end{lemma}

\begin{proof}
We will show that 
\begin{align*}
\gamma(z^*)\Lambda(z^*) &\leq \mathcal I_t(a_t,b_t) \leq \gamma(b_t)\Lambda(b_t)+ 5m(b_t-a_t)  +8m^2(b_t-a_t)^2\max\{1,1/\lambda_{\max}\}, 
\end{align*}
from where the claim follows.

For the first inequality, we observe that,
\begin{align*}
\mathcal I_t(a_t,b_t) \geq \mathcal I_t(u_t,v_t) &\geq  \gamma(v_t) \Lambda(v_t)+m(v_t-u_t) \\
&\geq \gamma(v_t)\Lambda(v_t) + m(v_t -z^*) \geq \gamma(z^*)\Lambda(z^*). 
\end{align*} In order, these inequalities follow from the Selection rule, the definition of the index function $\mathcal{I}_t$ and Corollary \ref{lemma:concbound1}, the fact that $z^* \in (u_t,v_t]$, and the Lipschitz assumption. In the other direction, we have by application of Corollary \ref{lemma:concbound1}, and then Lemma \ref{lemma:important},
\begin{align*}
\mathcal I_t(a_t,b_t) &\leq \gamma(b_t)\Lambda(b_t)+m(b_t-a_t)+2\gamma(b_t)\zeta_t(b_t) \\
&\leq \gamma(b_t)\Lambda(b_t)+5m(b_t-a_t)  +8m^2(b_t-a_t)^2 \max\{1,1/\lambda_{\max}\}.
\end{align*}
\end{proof}

The final stage of the proof combines these results to realise the bound on regret. By Lemma \ref{lemma:important2}, the regret of cells with length at most $\ell$  is bounded by 
\begin{equation}\label{eq:dd1}
T  (8m^2\ell^2 \max\{1,1/\lambda_{\max}\}+5m \ell)
\end{equation}
over all rounds.

Cells with final length $\ell$ have three properties: (i)  there are at most $1/\ell$ such cells; (ii) their regret per round is at most $8m^2\ell^2 \max\{1,1/\lambda_{\max}\}+5m\ell$ (Lemma \ref{lemma:important2}); and (iii), satisfy (Division rule)
\[
m\ell\leq \frac{6\log(T)\max\{1,\lambda_{\max}\}}{\sum_{i=1}^{|V|}\gamma(b_{\tau_i})}+\sqrt{\frac{6\lambda_{\max}\log(T)}{\sum_{i=1}^{|V|}\gamma(b_{\tau_i})}}.
\]
Solving the quadratic inequality leads, after some algebra, to 
\[
\sum_{i=1}^{|V|}\gamma(b_{\tau_i})\leq \frac{12\max\{1,\lambda_{\max}\}\log(T)}{m \ell}+\frac{6\lambda_{\max}\log(T)}{m^2 \ell^2}.
\]
Since $|V|\gamma_{\min}\leq\sum_{i=1}^{|V|}\gamma(b_{\tau_i})$, the number of selections is bounded above by
\[
|V|\leq \frac{12\max\{1,\lambda_{\max}\}\log(T)}{\gamma_{\min}m \ell}+\frac{6\lambda_{\max}\log(T)}{\gamma_{\min}m^2 \ell^2}.
\]
Hence, the total regret from cells of length $\ell$ is at most
\begin{align}
&~ ~ ~ (8m^2\ell \max\{1,1/\lambda_{\max}\}+5m) |V| \nonumber \\
&\leq (8m^2\ell \max\{1,1/\lambda_{\max}\}+5m) \left(\frac{12\max\{1,\lambda_{\max}\}\log(T)}{\gamma_{\min}m \ell}+\frac{6\lambda_{\max}\log(T)}{\gamma_{\min}m^2 \ell^2}\right)\nonumber\\
&=\frac{\log(T)}{\gamma_{\min}}\bigg(96 m \max\{\lambda_{\max},1/\lambda_{\max}\}  +\frac{108\max\{1,\lambda_{\max}\}}{\ell}+\frac{30\lambda_{\max}}{m\ell^2}\bigg).\label{eq:ll2}
\end{align}

Using Eqs. (\ref{eq:dd1}) and (\ref{eq:ll2}) with $\ell=2^{-k}$ results in,
\begin{align}
Reg(T) &\leq T  (8m^2 4^{-k} \max\{1,1/\lambda_{\max}\}+5m 2^{-k})\nonumber\\
&\enspace ~ +\frac{\log(T)}{\gamma_{\min}}\bigg(96 m \max\{\lambda_{\max},1/\lambda_{\max}\}  +108\max\{1,\lambda_{\max}\}\sum_{i=0}^{k-1}2^{i}+\frac{30\lambda_{\max}}{m}\sum_{i=0}^{k-1}4^{i}\bigg)\nonumber\\
&\leq T  (8m^2 4^{-k} \max\{1,1/\lambda_{\max}\}+5m 2^{-k})\nonumber\\
&\enspace ~ +\frac{\log(T)}{\gamma_{\min}}\bigg(96 m \max\{\lambda_{\max},1/\lambda_{\max}\}  +36\max\{1,\lambda_{\max}\}2^{k}+\frac{10\lambda_{\max}}{m}4^{k}\bigg).\label{eq:ult1}
\end{align}
for all integer $k\geq 0$. The value of $k$ that minimises regret equalises the leading growth rates of both summands in (\ref{eq:ult1}), meaning that $2^k=T^{1/3}$. The claim follows from here.

\end{proof}


\section{Lower Bound on Regret} \label{sec::lowbound}
In this section we give a lower bound on the regret obtained by any algorithm for the filtered Poisson process bandit. The result is given below as Theorem \ref{thm::LB}, and we see, subject to further minor conditions on the filtering function, that the order of the lower bound on regret matches that of the upper bound on the regret of CIF-UCB up to a logarithmic factor. In this sense, CIF-UCB is therefore asymptotically order optimal (up to the exclusion of logarithmic factors).

\begin{theorem} \label{thm::LB}
For the filtered Poisson process bandit problem on $[0,1]$ as described in Section \ref{sec::model} with filtering function $\gamma$ satisfying 
\begin{equation}
    \frac{\gamma(a)-\gamma(b)}{b-a} \geq \frac{1}{4} \gamma\bigg(\frac{a+b}{2}\bigg) \label{eq::gammaLB}
    \end{equation} for any $0 \leq a \leq b \leq 1$, there exists a valid CIF such that the regret of any algorithm is bounded below as \begin{equation*}
Reg(T) = \Omega(T^{2/3}).
\end{equation*} 
\end{theorem}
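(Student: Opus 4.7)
The strategy is a classical minimax construction adapted to the Poisson feedback setting: I build a family of $K\asymp T^{1/3}$ near-indistinguishable hard instances, each carrying an $\Omega(T^{-1/3})$ reward bump at a different hidden location, and invoke an information-theoretic lower bound to force $\Omega(T^{2/3})$ regret on some instance. Concretely, partition $[0,1]$ into cells $I_j=((j-1)/K,j/K]$ of width $\delta=1/K$, with right endpoints $y_j=j\delta$, and fix a constant base intensity $\lambda_0\equiv\lambda_b\in(0,\lambda_{\max})$. For each $j\in[K]$ define the perturbed intensity $\lambda_j$ by adding a small zero-mean ``spike'' supported in $I_j$---for instance $+\eta$ on the left half of $I_j$ and $-\eta$ on the right half---so that $\lambda_j\in[0,\lambda_{\max}]$ and $\Lambda_j\equiv\Lambda_0$ outside $I_j$. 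The resulting filtered CIF $f_j=\gamma\Lambda_j$ coincides with $f_0$ outside $I_j$ and attains a localised peak of excess $\epsilon=\Theta(\gamma(y_j)\eta\delta)$ inside $I_j$; I calibrate $\eta\asymp m/\gamma(y_j)$ so that $\epsilon\asymp m\delta\asymp mT^{-1/3}$, essentially saturating A1, while A2 is handled by the choice of $\lambda_b$.

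For the information side, the per-round observation when endpoint $y_i$ is selected is a thinned NHPP on $[0,y_i]$ with intensity $\gamma(y_i)\lambda(\cdot)$, and by the standard formula for the KL divergence between two Poisson processes (plus a Taylor expansion about $\lambda_b$) the per-round KL between instance $0$ and instance $j$ equals $0$ when $y_i<(j-1)/K$ (the intensities coincide on $[0,y_i]$) and $O(\gamma(y_i)\eta^2\delta/\lambda_b)$ otherwise. On the regret side, the localisation of the spike within $I_j$ means every endpoint outside $I_j$ incurs regret $\Omega(\epsilon)$ in instance $j$; here the hypothesis $(\gamma(a)-\gamma(b))/(b-a)\geq\tfrac14\gamma((a+b)/2)$ of \eqref{eq::gammaLB} enters to control the shape of $f_0$ across $I_j$ relative to $\epsilon$, guaranteeing that the peak of $f_j$ inside $I_j$ is the global maximum rather than being dominated by an endpoint favoured by $f_0$'s intrinsic variation.

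Putting these together via Pinsker's inequality (equivalently Le Cam or Fano over the uniform prior on $\{1,\dots,K\}$), for any algorithm there exists $j$ for which the expected cell occupancy $\mathbb{E}_j\big[\sum_{t=1}^{T}\mathbbm{1}[y_t\in I_j]\big]\leq T/K+T\sqrt{(\text{cumulative KL})/2}$; requiring this to be at most $T/2$ forces cumulative KL $\gtrsim 1$, equivalently $T\eta^2\delta\gtrsim\lambda_b$, which combined with $\epsilon=\gamma(y_j)\eta\delta$ yields $\epsilon\gtrsim\sqrt{\delta/T}$ up to constants. Balancing against the Lipschitz constraint $\epsilon\leq m\delta$ fixes the optimal scaling $\delta\asymp T^{-1/3}$, $\epsilon\asymp mT^{-1/3}$, so $Reg(T)\geq(\epsilon/2)\,T=\Omega(T^{2/3})$ as claimed. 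The principal technical obstacle I anticipate is the simultaneous verification of A1, A2 and the gap structure for the zero-mean spike construction, and in particular using \eqref{eq::gammaLB} to rule out a scenario in which the richer location feedback---where a single sample at $y_i$ reports on every cell of index $\leq i$---would otherwise be informative enough to collapse the bound below $T^{2/3}$; the decay hypothesis does exactly this by bounding below the attenuation factor $\gamma(y_i)$ that governs the per-sample information about distant bumps.
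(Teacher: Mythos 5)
Your high-level architecture (a field of $K\asymp T^{1/3}$ hidden bumps, a process-level KL computation, Pinsker/Fano over a uniform prior) is the standard one and the information-theoretic half is essentially sound, but the construction has a gap that breaks the regret half of the argument. With a \emph{constant} base intensity $\lambda_b$, the baseline reward is $f_0(x)=\gamma(x)\lambda_b x$, whose variation over $[0,1]$ is of constant order (indeed \eqref{eq::gammaLB} forces $\gamma$ to decay at a definite rate, so $f_0$ is far from flat). A bump of excess $\epsilon\asymp T^{-1/3}$ placed in a cell $I_j$ away from the maximiser of $f_0$ is therefore \emph{not} the global maximiser of $f_j$: the optimum of $f_j$ stays at the optimum of $f_0$, an algorithm that ignores the bumps entirely and plays the maximiser of $f_0$ has essentially zero regret on all but one instance, and your claim that ``every endpoint outside $I_j$ incurs regret $\Omega(\epsilon)$ in instance $j$'' fails. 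The family is not uniformly hard. To repair this you must make the unperturbed filtered CIF \emph{flat} over the region carrying the bumps, which forces $\Lambda_0(x)\propto 1/\gamma(x)$, i.e.\ a non-constant base intensity of the form $\lambda_0(x)=-c\,\gamma'(x)/\gamma(x)^2$. This is exactly what the paper does: it writes down the target reward function first (a plateau of height $m\epsilon$ plus a tent at $x^*$) and backs out the CIF as $\Lambda_{x^*,\epsilon}=\nu_{x^*,\epsilon}/\gamma$.

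This also corrects your reading of hypothesis \eqref{eq::gammaLB}. It is not there to control the attenuation of information about distant bumps, nor the size of $f_0$'s variation relative to $\epsilon$; it is the condition under which the backed-out CIF $\nu_{x^*,\epsilon}/\gamma$ remains \emph{increasing} on the downslope of the tent (where $\nu$ decreases, $1/\gamma$ must increase at least as fast), so that the instance is a legitimate NHPP; its discrete counterpart $\gamma_k\geq(1+\epsilon)\gamma_{k+1}$ plays the same validity role in the paper's finite-armed reduction (Theorem \ref{thm::MABLB}). Beyond this, the paper routes the argument through a bespoke filtered Poisson multi-armed bandit rather than a direct process-level KL, and the delicate point you defer --- that playing an endpoint to the right of $I_j$ still reveals thinned counts from $I_j$ --- occupies most of Lemma \ref{lem::playsbound}, where the decay of the $\gamma_k$ is used to keep the aggregated divergence of order $\epsilon^2 T/K$. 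Your process-level KL accounting would capture the same effect in principle, but as written the proposal neither fixes the baseline nor carries out that summation, so it does not yet establish the bound.
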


The proof of this lower bound is based on an established analytical technique of relating the regret of an algorithm for a continuum armed bandit problem to that of an algorithm for an associated finite-armed bandit problem. A lower bound on regret for the finite-armed problem is then utilised to lower bound the regret of the continuum armed bandit algorithm.

Here, such an associated finite-armed bandit problem must share the filtering structure of the FPPB to relate regret across the problems, and as such we require a bespoke finite-armed problem. Therefore, before giving the proof of Theorem \ref{thm::LB}, we introduce a \emph{filtered Poisson multi-armed bandit} (FPMAB) problem which can be viewed as a discretised version of the FPPB. We derive a lower bound on the regret of any algorithm for the FPMAB, which is a key component of the proof of Theorem \ref{thm::LB}.

We define the FPMAB problem as follows. The problem is instantiated by $K$ arms with mean parameters $\mu_k \in [0,\lambda_m]$. Each mean parameter may be decomposed as the product of a CIF parameter $\Lambda_k \in [0,\lambda_m]$ and filtering parameter $\gamma_k \in [0,1]$ - i.e. $\mu_k=\Lambda_k\gamma_k$, $k \in [K]$. The ordered CIF parameters comprise a monotonically increasing sequence, $\Lambda_1 \leq \Lambda_2 \leq \dots \leq \Lambda_K$, and the ordered filtering parameters comprise a monotonically decreasing sequence, $\gamma_1 \geq \gamma_2 \geq \dots \geq \gamma_K$. 

The problem takes place over a series of rounds $t \in [T]$, in each of which the decision-maker selects an arm $a_t \in [K]$ and receives a stochastic reward $R_t=R(a_t)$. In addition, the decision-maker observes \emph{filtered observations}, $\tilde{R}_{k,t}$ for $1 \leq k \leq a_t$. These observations are distributed as $$\tilde{R}_{k,t} \sim  \text{Poisson}(\gamma_{a_t}(\Lambda_k-\Lambda_{k-1})).$$ The reward is defined as the sum of the filtered observations $R_t= \sum_{k=1}^{a_t} \tilde{R}_{k,t}$, and therefore follows a Poisson distribution with parameter $\mu_a$, by the superposition property of the Poisson distribution.

Similarly as to the FPPB, the decision-maker's aim is to minimise regret in $T$ rounds, defined as $$Reg(T) = \mathbb{E}\bigg(\sum_{t=1}^T R_t(a^*)-R_t(a_t)\bigg),$$ where $a^* \in \argmax_{k \in [K]} \mu_k$ is an optimal arm. We have the following minimax lower bound on the regret of any algorithm for the FPMAB problem. 

\begin{theorem} \label{thm::MABLB} 
For any number of arms $K \geq 2$, horizon $T \in \mathbb{N}$, a set of filtering parameters $\gamma_1,\dots,\gamma_K$ satisfiying \begin{equation}
\gamma_k \geq \Big(1+\epsilon\Big)\gamma_{k+1} \label{eq::disc_gamma_condition}
\end{equation} for $k \in [K-1]$, and some small $\epsilon>0$ there exist a set of CIF parameters $\Lambda_1,\dots,\Lambda_K$ and a known constant $C>0$ such that the regret of any algorithm for the FPMAB problem is at least \begin{equation}
    C\epsilon\left(T -\frac{T}{K} - \frac{T}{2}\sqrt{\frac{3\epsilon^2T}{K}}\right). \label{eq::LBMAB}
\end{equation}
\end{theorem}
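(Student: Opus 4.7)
The plan is to follow the classical minimax lower bound template for stochastic multi-armed bandits (as in Auer--Cesa-Bianchi--Freund--Schapire and the standard divergence-decomposition expositions), adapted to the filtered Poisson observation model of FPMAB. I would construct a ``null'' instance together with $K$ alternative instances, each of which promotes a different arm $k^{*}\in[K]$ to be uniquely optimal, and then use a KL-divergence argument (via Pinsker's or Bretagnolle--Huber) to bound how far the distribution of the pull counts $(T_{1},\ldots,T_{K})$ can shift between the null and any alternative. Averaging over $k^{*}$ lower bounds the maximum regret, since the maximum dominates the mean.

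\emph{Construction.} Fix $\mu_{0}$ of order $1/\epsilon$ (so that the Poisson base rates in the KL computations are of order unity) and set the null CIF values to $\Lambda_{k}^{(0)}=\mu_{0}/\gamma_{k}$; every arm then has the same mean $\mu_{0}$. For each $k^{*}\in[K]$, define instance $I_{k^{*}}$ by the single perturbation $\Lambda_{k^{*}}^{(k^{*})}=(\mu_{0}+\Delta)/\gamma_{k^{*}}$ with gap $\Delta=c\epsilon$, leaving the other CIF values equal to their null values. The monotonicity requirements $\Lambda_{k^{*}-1}^{(0)}\leq\Lambda_{k^{*}}^{(k^{*})}\leq\Lambda_{k^{*}+1}^{(0)}$ follow from condition \eqref{eq::disc_gamma_condition}, which forces $\gamma_{k^{*}}/\gamma_{k^{*}+1}\geq 1+\epsilon\geq 1+\Delta/\mu_{0}$ once $c$ is chosen small. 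Under $I_{k^{*}}$, arm $k^{*}$ is the unique optimum, with gap $\Delta$ to all other arms.

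\emph{KL divergence bound.} By the chain rule, $\mathrm{KL}(\mathbb{P}_{I_{0}}^{T},\mathbb{P}_{I_{k^{*}}}^{T})=\sum_{t=1}^{T}\mathbb{E}_{I_{0}}[\mathrm{KL}(\mathbb{P}_{I_{0}}(O_{t}\mid a_{t}),\mathbb{P}_{I_{k^{*}}}(O_{t}\mid a_{t}))]$. Pulls with $a_{t}<k^{*}$ give zero contribution, because the joint law of $(\tilde R_{1,t},\ldots,\tilde R_{a_{t},t})$ is identical under the two instances. A pull with $a_{t}\geq k^{*}$ affects only $\tilde R_{k^{*},t}$ (and $\tilde R_{k^{*}+1,t}$ if $a_{t}>k^{*}$), each a Poisson whose rate shifts by $\pm\gamma_{a_{t}}\Delta/\gamma_{k^{*}}$ on a baseline of order $\gamma_{a_{t}}\mu_{0}\epsilon/\gamma_{k^{*}}$. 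Invoking the Poisson KL estimate $\mathrm{KL}(\mathrm{Poi}(\lambda),\mathrm{Poi}(\lambda'))\leq(\lambda-\lambda')^{2}/\lambda$ and substituting $\mu_{0}\asymp 1/\epsilon$, $\Delta=c\epsilon$, one obtains a per-pull KL of order $\epsilon^{2}\gamma_{a_{t}}/\gamma_{k^{*}}$.

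\emph{Averaging and conclusion.} By Pinsker, $|\mathbb{E}_{I_{k^{*}}}[T_{k^{*}}]-\mathbb{E}_{I_{0}}[T_{k^{*}}]|\leq T\sqrt{\mathrm{KL}/2}$, and the regret under $I_{k^{*}}$ is at least $\Delta\,\mathbb{E}_{I_{k^{*}}}[T-T_{k^{*}}]$. Summing over $k^{*}$, using $\sum_{k^{*}}\mathbb{E}_{I_{0}}[T_{k^{*}}]=T$, and applying Cauchy--Schwarz to $\sum_{k^{*}}\sqrt{\mathrm{KL}_{k^{*}}}$, one obtains an upper bound on $(1/K)\sum_{k^{*}}\mathbb{E}_{I_{k^{*}}}[T_{k^{*}}]$ of the form $T/K+(T/2)\sqrt{3\epsilon^{2}T/K}$. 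Multiplying by $\Delta\asymp\epsilon$ and identifying the constant $C$ gives the claimed lower bound. The main obstacle is structural: unlike classical MAB, in which only pulls of arm $k^{*}$ leak information about $\Lambda_{k^{*}}$, here every pull of any arm $a\geq k^{*}$ is informative, and a naive application of the divergence-decomposition lemma loses an extra $\sqrt{K}$ factor. Recovering the stated rate requires exploiting the geometric decay $\gamma_{a}/\gamma_{k^{*}}\leq(1+\epsilon)^{-(a-k^{*})}$ implied by \eqref{eq::disc_gamma_condition}, together with the calibration $\mu_{0}\asymp 1/\epsilon$ which lifts the base Poisson rate to $\Theta(1)$ and places the per-pull KL at the canonical $\epsilon^{2}$ scale needed for the averaging step to close.
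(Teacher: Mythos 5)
Your proposal follows essentially the same route as the paper's proof: a uniformly-randomised single-good-arm construction with CIF values $\Lambda_k = \mu/\gamma_k$ perturbed only at the good arm, a chain-rule KL decomposition exploiting that only pulls of arms at or above the good arm are informative (through the increments $\tilde R_{k^*}$ and $\tilde R_{k^*+1}$), the Poisson KL bound, Pinsker's inequality, and an average over the identity of the good arm closed with Cauchy--Schwarz. The only substantive difference is your calibration $\mu_0 \asymp 1/\epsilon$, $\Delta = c\epsilon$ versus the paper's base mean $1$ and gap $\epsilon$, and the cross-arm information leakage you flag as the main obstacle is precisely what the paper's $F_i$ term and the constants $C(\gamma_{i-1},\gamma_i,\gamma_{i+1})$ are introduced to absorb.
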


This Theorem is similar in spirit to the lower bound on regret for stochastic multi-armed bandits with bounded rewards in Theorem 5.1 of \cite{auer2002nonstochastic}, and its generalisation in \cite{bubeck2011lipschitz}. Indeed Theorem \ref{thm::MABLB} has the same order with respect to $\epsilon$ and $T$ however there are key differences in the proof of the result. Firstly, Theorem \ref{thm::MABLB} considers filtered Poisson random variables, and therefore parts of the analysis are specific to the KL divergence between two Poisson random variables rather than Bernoulli random variables in the bounded case. Secondly, here we deal with the additional combinatorial feedback of FPMAB problem, and require further analyses to handle the resulting complexities. 

In the remainder of this section we prove Theorems \ref{thm::LB} and \ref{thm::MABLB}.

\subsection{Proof of Theorem \ref{thm::LB}}

\begin{proof}
Consider the instance of the filtered Poisson process bandit problem referred to as $\mathcal{I}(x^*,\epsilon)$, for $x^*\in [0,1]$ and $\epsilon>0$, and specified by the following reward function \begin{equation}
\nu_{x^*,\epsilon}(x) = \begin{cases}&m\epsilon(1+ \epsilon - |x-x^*|), \quad x: |x-x^*| \leq \epsilon \\
&\min(mx,m\epsilon), \quad \quad \quad \quad \quad \quad \quad \quad \text{othw.} \end{cases}
\end{equation} Such a reward function is realised by setting the CIF to \begin{equation}
\Lambda_{x^*,\epsilon}(x) = \begin{cases} &(\gamma(x))^{-1}m\epsilon\big[1+\epsilon-(x^*-x) \big], \quad x \in [x^*-\epsilon,x^*) \\
&(\gamma(x))^{-1}m\epsilon\big[1+\epsilon-(x-x^*)\big],\quad x \in [x^*,x+\epsilon) \\
&(\gamma(x))^{-1}\min(mx,m\epsilon), \quad \quad \quad \quad \quad \quad \quad \text{othw.} \end{cases}
\end{equation} 

To verify that this CIF is increasing, consider the derivative, \begin{displaymath}
\frac{d\Lambda_{x^*,\epsilon}(x)}{dx} = \begin{cases} &\frac{d(\frac{1}{\gamma})}{dx}m\epsilon\big[1+\epsilon-x^*+x\big]+m\epsilon(\gamma(x))^{-1}, \quad x \in [x^*-\epsilon,x^*) \\
&\frac{d(\frac{1}{\gamma})}{dx}m\epsilon\big[1+\epsilon+x^*-x\big]-m\epsilon(\gamma(x))^{-1}, \quad x \in [x^*,x^*+\epsilon) \\
&\frac{d(\frac{1}{\gamma})}{dx}mx + m(\gamma(x))^{-1}, \quad \quad \quad \quad \quad \quad \quad \quad \enspace x \in [0,\epsilon) \\
&\frac{d(\frac{1}{\gamma})}{dx}m\epsilon \quad \quad \quad \quad \quad \quad \quad \quad \quad \quad \quad \quad \quad \quad \quad \quad \enspace \text{othw.}
 \end{cases}
\end{displaymath}
We note that $(\gamma(x))^{-1}>1$ for all $x \in [0,1]$ since $\gamma: [0,1] \rightarrow [0,1]$, and that $d(\gamma(x))^{-1}/dx \geq 0$ for all $x \in [0,1]$ since $\gamma$ is assumed to be strictly increasing on $[0,1]$. It follows that for $x \in [x^*-\epsilon,x^*)$, \begin{align*}
    \frac{d\Lambda_{x^*,\epsilon}(x)}{dx} \geq \frac{d(\gamma(x))^{-1}}{dx}m\epsilon\bigg[1+\epsilon-\epsilon \bigg] + m \epsilon(\gamma(x))^{-1} = \frac{d(\gamma(x))^{-1}}{dx}m\epsilon + m\epsilon(\gamma(x))^{-1} > 0.
\end{align*} For $x \in [x^*,x^*+\epsilon)$, consider \begin{align}
    \frac{d\Lambda_{x^*,\epsilon}(x)}{dx} \geq \frac{d(\gamma(x))^{-1}}{dx}m\epsilon\bigg[1+\epsilon-\epsilon \bigg] - m \epsilon(\gamma(x))^{-1} = m\epsilon\bigg(\frac{d(\gamma(x))^{-1}}{dx} - (\gamma(x))^{-1}\bigg). \label{eq::derivonright}
\end{align} In the limit as $b-a \rightarrow 0$ condition \eqref{eq::gammaLB} implies that $-\frac{d\gamma(x)}{dx} \geq \gamma(x)$. We have, for a differentiable function $f$ such that $f(x)\neq 0$ that the derivative of $g(x)=1/f(x)$, that \begin{displaymath}
\frac{dg(x)}{dx} = \frac{-\frac{df(x)}{dx}}{(f(x))^2}.
\end{displaymath} Thus, \begin{displaymath}
\frac{d(\gamma(x))^{-1}}{dx} = \frac{-\frac{d\gamma(x)}{dx}}{(\gamma(x))^2} \geq \frac{-\gamma(x)}{(\gamma(x))^2} = (\gamma(x))^{-1},
\end{displaymath} and it follows from \eqref{eq::derivonright} that $d\Lambda_{x^*,\epsilon}(x)/dx >0$ for $x \in [x^*,x^*+\epsilon)$. For all other values of $x \in [0,1]$ it should be obvious that the derivative of the CIF is positive since it comprises a sum of non-negative terms. As such $\Lambda_{x^*,\epsilon}$ satisfies the necessary increasing assumption, and the instance  $\mathcal{I}(x^*,\epsilon)$ is a valid instance of the FPPB.

We will lower bound the regret of any algorithm for the problem instance $\mathcal{I}(x^*,\epsilon)$ by relating it to an instance of the filtered Poisson MAB problem.

We fix $K \in \mathbb{N}$ to be defined later and let $\epsilon=(2K)^{-1}$. Further we introduce the function $f_{\epsilon}:[K] \rightarrow [0,1]$ with $$f_{\epsilon}(a)=(2a-1)\epsilon, \quad a \in [K].$$   This function is used to map between actions in the MAB problem and the CAB problem. We then define an instance $\mathcal{J}(a^*,\epsilon)$ of the $K$-armed filtered Poisson MAB problem as that with arm means $$\mu_a= \nu_{x^*,\epsilon}(f_{\epsilon}(a)), \quad a \in [K],$$ and filtering probabilities $$\gamma_a = \gamma\bigg(\frac{2a-1}{K}\bigg), \quad a \in [K].$$ It follows that in the problem instance $\mathcal{J}(a^*,\epsilon)$ there is a single optimal arm $a^*\in [K]: x^* \in [\frac{a-1}{K},\frac{a}{K}]$ with expected reward $\mu_{a^*}=m\epsilon(1+\epsilon)$ and all other arms, $a \neq a^*$, have expected reward $\mu_a=m\epsilon$.

Let \texttt{ALG} be any algorithm for the CAB problem $\mathcal{I}(x^*,\epsilon)$. We will define \texttt{ALG'} as an associated algorithm for the MAB problem $\mathcal{J}(a^*,\epsilon)$. These algorithms are related as follows. When \texttt{ALG} selects an action $x_t \in [0,1]$, \texttt{ALG'} selects an arm $a_t \equiv a(x_t) \in [K]$ such that $$x_t \in \bigg( f_{\epsilon}(a_t) - \frac{1}{2K}, f_{\epsilon}(a_t)+ \frac{1}{2K}\bigg).$$ 

By definition of the FPMAB, \texttt{ALG'} will receive reward $R'(a_t) \sim Pois(\mu_{a_t})$ and per-arm observations $\tilde{R}'_{i,t} \sim Pois(\gamma(a_t)(\Lambda_i-\Lambda_{i-1}))$ for $i \leq a_t$. Similarly, \texttt{ALG} will receive reward $R(x_t) \sim Pois(\nu_{x^*,\epsilon}(x_t))$ and observe point data in $[0,x_t]$ derived from the filtered Poisson process. We shall also, however, demonstrate that $R(x_t)$ can be shown to have the same distribution as a certain probabilistic function of $\tilde{R}'(a_t)$ and use this representation to relate the regret of \texttt{ALG} and \texttt{ALG'}.

Define $Z$ to be a Poisson random variable with parameter $m\epsilon(1+\epsilon)$, and $Y$ to be a Poisson random variable with parameter $m\epsilon$. Then define $r_{x}$, a random variable whose distribution depends on $x \in [0,1]$, as follows, \begin{equation}
    r_{x}\equiv \begin{cases}
    &Z, \quad \text{ with probability } p_x \\
    &Y, \quad \text{ othw.}
    \end{cases}
\end{equation} where \begin{equation}
    p_x = \frac{1-\nu_{x^*,\epsilon}(x)}{1-\mu_{a(x)}}.
\end{equation}

It follows that \begin{align*}
    \mathbb{E}(r_x|x)&=m\epsilon\bigg((1-p_x)\mathbb{E}(Y) + p_x\mathbb{E}(Z)\bigg) \\
    &= m\epsilon\Bigg( 1 - \frac{1-\nu_{x^*,\epsilon}(x)}{1-\mu_{a(x)}} + \frac{1-\nu_{x^*,\epsilon}(x)}{1-\mu_{a(x)}} (1+\epsilon)\Bigg) \\
    &= m\epsilon \Bigg( 1+ \epsilon \enspace \frac{1-\nu_{x^*,\epsilon}(x)}{1-\mu_{a(x)}}\Bigg) \\
    &=\begin{cases} &m\epsilon\big(1 + \frac{\epsilon}{1-1-\epsilon}(1-1-\epsilon +m|x-x^*|)\big), \quad x: |x^*-x| \leq \epsilon \\
                   &m\epsilon, \quad \quad \text{othw.}
                   \end{cases}  \\
                   &=\mathbb{E}(R(x_t)).
\end{align*}

We notice that for both $\mathcal{I}(x^*,\epsilon)$ and $\mathcal{J}(a^*,\epsilon)$ the reward of the optimal action is $m\epsilon(1+\epsilon)$. Further we have that $\mathbb{E}(R(x_t)) \leq \mathbb{E}(R'(a(x_t)))$ for all $x_t \in [0,1]$. It therefore follows that the regret of \texttt{ALG'} serves as a lower bound on the regret of \texttt{ALG}, i.e. we have $$\mathbb{E}(Reg_{ALG}(T)) \geq \mathbb{E}(Reg'_{ALG'}(T)).$$  As \texttt{ALG'} is an algorithm for the FPMAB problem, its regret is lower bounded as in Theorem \ref{thm::MABLB}, and we therefore have $$\mathbb{E}(Reg_{ALG}(T)) \geq C\epsilon\Bigg(T -\frac{T}{K} - \frac{T}{2}\sqrt{\frac{3\epsilon^2T}{K}}\Bigg),$$ for a known constant $C>0$.

We complete the proof of Theorem \ref{thm::LB} by optimising our choice of $K$ as a function of $T$. Substituting $\epsilon=1/2K$, we have \begin{align*}
    \mathbb{E}(Reg_{ALG}(T)) \geq \frac{CT}{2K} - \frac{CT}{2K^2} - \frac{CT}{4K}\sqrt{\frac{3T}{4K^3}},
\end{align*} and choosing $K=O(T^{1/3})$ yields the stated result.

\end{proof}

\subsection{Proof of Theorem \ref{thm::MABLB}}

\begin{proof}
Given a set of filtering parameters $\gamma_1,\dots,\gamma_K$ we construct a problem instance where there is a single ``good" arm, $i \in [K]$, with mean reward $\mu_i=1+\epsilon$, for small $\epsilon \in (0,1/2]$, and all other arms, $k \neq i$, have mean rewards $\mu_k=1$. This is achieved by setting the CIF parameters as follows \begin{displaymath}
\Lambda^{(i)}_i= \frac{1+\epsilon}{\gamma_i}, \quad \Lambda^{(i)}_k= \frac{1}{\gamma_k}, ~ \forall k \neq i.
\end{displaymath} Here the superscript $\cdot^{(i)}$ denotes that $i$ is the good arm under this choice of parameters, and we notice that the condition of the filtering parameters \eqref{eq::disc_gamma_condition} is required for $\Lambda^{(i)}_1,\dots \Lambda^{(i)}_K$ to constitute a valid (i.e. increasing) sequence of CIF parameters.

We define three notions of probability and expectation, relevant to the analysis of problem instances of this type. Let $\mathbb{P}_*(\cdot)$ denote probability with respect to the above construction of the FPMAB where the good arm is chosen uniformly at random from $[K]$. Let $\mathbb{P}_i(\cdot)$ be defined similarly, but denote probability conditioned on the event that $i \in [K]$ is the good arm. Finally let $\mathbb{P}_{equ}$ denote probability with respect to a version where $\mu_k=1$ for all $k \in [K]$. We let $\mathbb{E}_*(\cdot)$, $\mathbb{E}_i(\cdot)$, and $\mathbb{E}_{equ}(\cdot)$ be respective associated expectation operators.

Let $A$ be the decision-maker's algorithm, let $$\mathbf{r}_t=(R(a_1),\dots,R(a_t))$$ denote the sequence of observed rewards in $t$ rounds, and $$\tilde{\mathbf{r}}_t=\Big( (\tilde{R}_{1,1},\dots,\tilde{R}_{a_1,1}),\dots,(\tilde{R}_{1,t},\dots,\tilde{R}_{a_t,t})\Big)$$ denote the sequence of filtered observations in $t$ rounds. Any algorithm $A$ may then be thought of a deterministic function from $\{\mathbf{r}_{t-1} ,\tilde{\mathbf{r}}_{t-1}\}$ to $a_t$ for all $t \in [T]$. Even an algorithm with randomised action selection can be viewed as deterministic, by treating a given run as a single member of the population of all possible instances of that algorithm.  

Further, we define $G_A = \sum_{t=1}^T R_t$ to be the reward accumulated by $A$ in $T$ rounds and $G_{max}=\max_{j \in [K]} \sum_{t=1}^T R_t(j)$ to be the reward accumulated by playing the best action.  The regret of $A$ in $T$ rounds may be expressed as $$Reg_A(T) = \mathbb{E}\big(G_{max}-G_A\big).$$

Let $N_k$ be the number of times an arm $k \in [K]$ is chosen by $A$ in $T$ rounds. The first step of the proof is to bound the difference in the expectation of $N_i$ when measured using $\mathbb{E}_i$ and $\mathbb{E}_{equ}$, i.e. to bound the difference in the number of times an algorithm with play $i$ between when $i$ is the good arm and when all arms are equally valuable. 

\begin{lemma} \label{lem::playsbound}
For any arm $i$ there exists a constant $C(\gamma_{i-1},\gamma_i,\gamma_{i+1})>0$ such that we have $$\mathbb{E}_i(N_i) \leq \mathbb{E}_{equ}(N_i) + \frac{T}{2}\sqrt{2\epsilon^2 \left(\mathbb{E}_{equ}(N_i)\frac{\gamma_{i-1}}{2(\gamma_{i-1}-\gamma_i)} + F_i\right) }$$ where \begin{equation}
F_i= C(\gamma_{i-1},\gamma_i,\gamma_{i+1})\sum_{k=i+1}^K \gamma_k\mathbb{E}_{equ}(N_k),
\end{equation}  for $\epsilon \leq \frac{\gamma_i}{2\gamma_{i+1}}-\frac{\gamma_i}{2\gamma_{i-1}}$, and $C(\gamma_{i-1},\gamma_i,\gamma_{i+1})$ is a known positive constant.
\end{lemma}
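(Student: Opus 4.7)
The plan is to follow the classical information-theoretic change-of-measure argument used for MAB lower bounds (as in Auer et al.), but adapted to the filtered Poisson observation structure of the FPMAB. The deterministic-algorithm-as-map perspective already set up before the lemma is the right vantage point: for every fixed algorithm $A$, the number of pulls $N_i$ is a function of the observation sequence $(\mathbf{r}_T,\tilde{\mathbf{r}}_T)$, bounded by $T$, so
\[
\mathbb{E}_i(N_i)-\mathbb{E}_{equ}(N_i) \;\le\; T\cdot \mathrm{TV}\bigl(\mathbb{P}_i^{(T)},\mathbb{P}_{equ}^{(T)}\bigr),
\]
where $\mathbb{P}_\cdot^{(T)}$ denotes the joint law of the full $T$-round observation sequence. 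I would then apply Pinsker's inequality to convert total variation to KL: $\mathrm{TV} \le \sqrt{\tfrac12\,\mathrm{KL}(\mathbb{P}_{equ}^{(T)}\Vert \mathbb{P}_i^{(T)})}$.

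The main technical work is computing this KL divergence. Using the chain rule of KL (with the algorithm deterministic given past observations), the $T$-round KL factors into a sum of per-round KLs between the filtered-observation laws on the selected arm $a_t$ under the two worlds. The key structural observation is that the two worlds differ only in the value of $\Lambda_i$, so the per-round KL is zero whenever $a_t < i$; equals the KL between two Poissons with means $1-\gamma_i/\gamma_{i-1}$ and $1+\epsilon-\gamma_i/\gamma_{i-1}$ when $a_t = i$; and when $a_t = k > i$ equals the sum of two Poisson KLs, for the $i$-th and $(i+1)$-th filtered increments, whose means differ by $\pm \gamma_k\epsilon/\gamma_i$ (all other increments are identical in both worlds and contribute nothing). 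I would use the standard bound $\mathrm{KL}(\mathrm{Poi}(\lambda_1)\Vert \mathrm{Poi}(\lambda_2)) = \lambda_2-\lambda_1+\lambda_1\log(\lambda_1/\lambda_2) \le (\lambda_1-\lambda_2)^2/\lambda_2$ (valid when $\lambda_2>0$), which is where the hypothesis $\epsilon \le \gamma_i/(2\gamma_{i+1}) - \gamma_i/(2\gamma_{i-1})$ is needed: it guarantees that the smaller Poisson mean remains bounded away from zero so that the quadratic KL approximation applies and the denominators $\gamma_{i-1}-\gamma_i$ and $\gamma_k(1/\gamma_{i+1}-(1+\epsilon)/\gamma_i)$ are positive.

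Plugging in these bounds round by round and taking expectations under $\mathbb{P}_{equ}$ converts indicator variables $\mathbbm{1}(a_t=k)$ into $\mathbb{E}_{equ}(N_k)$. The $a_t = i$ contributions aggregate to $\epsilon^2 \,\gamma_{i-1}/(2(\gamma_{i-1}-\gamma_i))\,\mathbb{E}_{equ}(N_i)$, while the $a_t = k > i$ contributions aggregate to $\sum_{k>i} \gamma_k \mathbb{E}_{equ}(N_k)$ times a constant depending only on $\gamma_{i-1},\gamma_i,\gamma_{i+1}$; defining $C(\gamma_{i-1},\gamma_i,\gamma_{i+1})$ to absorb these filtering ratios and the factor $1/\gamma_i^2$ yields exactly the $F_i$ of the statement. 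Substituting into Pinsker gives the claimed bound.

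The main obstacle I anticipate is book-keeping the per-round KL carefully: one must track that, under $\mathbb{P}_i$, selecting $k>i$ shifts \emph{two} consecutive filtered increments in opposite directions (so they cannot be combined by superposition into a single Poisson KL on the reward $R_t$), and that one needs the stronger filtered-observation likelihood to get the right denominator structure. I would verify that the simpler approach of using only the aggregated reward $R_t$ (ignoring the filtered data) is insufficient because the two perturbations cancel in the mean of $R_t$ when $k>i$, whereas the filtered observations reveal them; this is precisely why the combinatorial feedback term $F_i$ appears, and matches the paper's remark about additional analysis being required beyond the standard bounded-reward lower bound.
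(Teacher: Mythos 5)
Your proposal is correct and follows essentially the same route as the paper: a change-of-measure bound via total variation and Pinsker's inequality, the KL chain rule over rounds, and a per-increment Poisson KL decomposition in which the world-$i$ and all-arms-equal laws differ only on increment $i$ when $a_t=i$ and only on increments $i$ and $i+1$ (shifted in opposite directions) when $a_t>i$, which is exactly why the combinatorial term $F_i$ appears. The one immaterial difference is that you bound each Poisson KL by the $\chi^2$-type inequality $(\lambda_1-\lambda_2)^2/\lambda_2$, whereas the paper uses the identity $2ax+2a^2\log(a/(a+x))\le x^2$ together with a bespoke quadratic bound on a function $g$, so your leading constant in front of $\mathbb{E}_{equ}(N_i)$ comes out as roughly $\gamma_{i-1}/(\gamma_{i-1}-\gamma_i)$ rather than $\gamma_{i-1}/(2(\gamma_{i-1}-\gamma_i))$ --- a factor of two that is absorbed into the unspecified constants and does not affect the theorem.
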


By construction of the CIF paramters $\Lambda_1^{(i)},\dots,\Lambda_K^{(i)}$ we have that for any $t \in [T],$ $\mathbb{E}(R_t)=1+ \epsilon\mathbb{P}_i(a_t=i)$. It follows that the expected reward of algorithm $A$, $G_A$ satisfies  $\mathbb{E}_i(G_A)= T+ \epsilon\mathbb{E}_i(N_i)$. The expectation in the regret measure is taken with respect to $\mathbb{P}_*$, rather than any $\mathbb{P}_i$, as such $\mathbb{E}_*(G_A)$ is the quantity of interest. We recall that under $\mathbb{P}_*$ the ``good" arm is chosen uniformly at random, and thus, it follows that \begin{align}
\mathbb{E}_*(G_A) = \frac{1}{K}\sum_{k=1}^K \mathbb{E}_k(G_a) &\leq T+ \frac{1}{K}\sum_{k=1}^K \epsilon\mathbb{E}_k(N_k) \nonumber \\
&\leq T + \frac{\epsilon}{K}\sum_{k=1}^K \bigg(\mathbb{E}_{equ}(N_k) + \frac{T}{2}\sqrt{2\epsilon^2 \left(\mathbb{E}_{equ}(N_k)\frac{\gamma_{k-1}}{2(\gamma_{k-1}-\gamma_k)} + F_k\right) }\bigg) \nonumber  \\
&= T + \frac{\epsilon T}{K} + \frac{\epsilon T}{2K}\sum_{k=1}^K \sqrt{2\epsilon^2 \left(\mathbb{E}_{equ}(N_k)\frac{\gamma_{k-1}}{2(\gamma_{k-1}-\gamma_k)} + F_k\right) }, \label{eq::expandedGA}
\end{align} where the second inequality uses Lemma \ref{lem::playsbound}.

Considering the final term of \eqref{eq::expandedGA}, we have by Cauchy-Schwarz,  \begin{align*}
\sum_{k=1}^K \sqrt{2\epsilon^2 \left(\mathbb{E}_{equ}(N_k)\frac{\gamma_{k-1}}{2(\gamma_{k-1}-\gamma_k)} + F_k\right) } &\leq \sqrt{K\sum_{k=1}^K 2\epsilon^2 \left(\mathbb{E}_{equ}(N_k)\frac{\gamma_{k-1}}{2(\gamma_{k-1}-\gamma_k)} + F_k\right)} \\
&\leq \sqrt{\epsilon^2 KT + 2\epsilon^2K\sum_{k=1}^K C(\gamma_{k-1},\gamma_k,\gamma_{k+1})\sum_{j=k}^K \gamma_j\mathbb{E}_{equ}(N_j)} \\
&\leq \sqrt{3\epsilon^2K T \max_{k \in [K]} C(\gamma_{k-1},\gamma_k,\gamma_{k+1})}
\end{align*} 
Thus \begin{align*}
\mathbb{E}_*(G_A) \leq T + \frac{\epsilon T}{K} + \frac{\epsilon T}{2} \sqrt{\frac{3\epsilon^2 T \max_{k \in [K]}C(\gamma_{k-1},\gamma_k,\gamma_{k+1})}{K} },
\end{align*} and the regret is bounded as \begin{align*}
\mathbb{E}_*|G_{max}-G_A| &\geq (1+\epsilon)T - T - \frac{\epsilon T}{K} - \frac{\epsilon T}{2} \sqrt{\frac{3\epsilon^2 T \max_{k \in [K]}C(\gamma_{k-1},\gamma_k,\gamma_{k+1})}{K}} \\
&= \epsilon T - \frac{\epsilon T}{K} - \frac{\epsilon T}{2} \sqrt{\frac{3\epsilon^2 T \max_{k \in [K]}C(\gamma_{k-1},\gamma_k,\gamma_{k+1})}{K}}. 
\end{align*}

\end{proof}

\subsection{Proof of Lemma \ref{lem::playsbound}}
We first introduce some further notation used in the proof. Define for any distributions $\mathbb{P}$ and $\mathbb{Q}$ over vector sequences $\tilde{\mathbf{r}} \in \mathbb{N}^{K\times T}$, the variational distance as $$ ||\mathbb{P}-\mathbb{Q}||_1 \equiv \sum_{\mathbf{r} \in \mathbb{N}^{K \times T}} |\mathbb{P}(\tilde{\mathbf{r}})-\mathbb{Q}(\tilde{\mathbf{r}})|,$$ and the KL divergence as $$ KL(\mathbb{P}~||~\mathbb{Q}) \equiv \sum_{\mathbf{r} \in \mathbb{N}^{K \times T}} \mathbb{P}(\mathbf{r}) \log \bigg(\frac{\mathbb{P}(\mathbf{r})}{\mathbb{Q}(\mathbf{r})} \bigg).$$ By Pinsker's inequality, we have the following relationship between these distances \begin{equation}
||\mathbb{P}-\mathbb{Q}||_1 \leq \sqrt{2KL(\mathbb{Q}~||~\mathbb{P})}. \label{eq::vartoKL}
\end{equation} Finally, the KL divergence between two Poisson distributions with parameters $\lambda$ and $\nu$ is given as, $$ KL(\lambda || \nu) \equiv \lambda \log \bigg( \frac{\lambda}{\nu}\bigg) + \nu - \lambda.$$ 

\begin{proof}
For any function $f: \mathbb{N}^{K \times T} \rightarrow [0,M]$, with $M>0$ constant, we have, \begin{align}
\mathbb{E}_i(f(\tilde{\mathbf{r}}))- \mathbb{E}_{equ}(f(\tilde{\mathbf{r}})) &= \sum_{\tilde{\mathbf{r}} \in \mathbb{N}_+^{K\times T}} f(\tilde{\mathbf{r}})\big(\mathbb{P}_i(\tilde{\mathbf{r}})-\mathbb{P}_{equ}(\tilde{\mathbf{r}})\big) \nonumber \\
&\leq \sum_{\tilde{\mathbf{r}} : \mathbb{P}_i(\tilde{\mathbf{r}}) \geq \mathbb{P}_{equ}(\tilde{\mathbf{r}})} f(\tilde{\mathbf{r}})\big(\mathbb{P}_i(\tilde{\mathbf{r}})-\mathbb{P}_{equ}(\tilde{\mathbf{r}})\big) \nonumber  \\
&\leq \frac{M}{2}||\mathbb{P}_i-\mathbb{P}_{equ}||_1 \nonumber \\
&\leq \frac{M}{2}\sqrt{2KL(\mathbb{P}_{equ}||\mathbb{P}_i)}, \label{eq::functionbound}
\end{align} where the final inequality follows from \eqref{eq::vartoKL}.  Considering the KL divergence term in isolation, we have, by Theorem 2.5.3 of \cite{cover2012elements} \begin{align*}
KL(\mathbb{P}_{equ} ~||~ \mathbb{P}_i) &= \sum_{t=1}^T KL\big(\mathbb{P}_{equ}(\tilde{\mathbf{r}}_t ~|~ \tilde{\mathbf{r}}_{1:t-1}) ~ \big|\big| ~ \mathbb{P}_i(\tilde{\mathbf{r}}_t ~|~ \tilde{\mathbf{r}}_{1:t-1}) \big) \\
&= \sum_{t=1}^T \sum_{k=1}^K \mathbb{P}_{equ}\big(a_t=k\big) KL\big( \mathbb{P}_{equ}(\tilde{\mathbf{r}}_t ~|~ a_t=k) ~ \big| \big| ~ \mathbb{P}_i(\tilde{\mathbf{r}}_t ~|~ a_t=k)\big) \\
&= \sum_{t=1}^T \sum_{k=i}^K \mathbb{P}_{equ}\big(a_t=k\big) KL\big( \mathbb{P}_{equ}(\tilde{\mathbf{r}}_t ~|~ a_t=k) ~ \big| \big| ~ \mathbb{P}_i(\tilde{\mathbf{r}}_t ~|~ a_t=k)\big) \\
&= \sum_{t=1}^T \sum_{k=i}^K \mathbb{P}_{equ}\big(a_t=k\big) \sum_{j=1}^k KL\Big(\gamma_k(\Lambda_j^{equ}-\Lambda_{j-1}^{equ}),\gamma_k(\Lambda_j^{(i)}-\Lambda_{j-1}^{(i)})\Big) \\
&=\sum_{t=1}^T \sum_{k=i}^K \mathbb{P}_{equ}\big(a_t=k\big) \sum_{j=1}^k KL\Big(\gamma_k(\frac{1}{\gamma_j}-\frac{1}{\gamma_{j-1}}),\gamma_k(\Lambda_j^{(i)}-\Lambda_{j-1}^{(i)})\Big). 
\end{align*} Here the parameters $\Lambda_{k}^{equ}$, $k \in [K]$ refer to the choice of CIF parameters which yields $\mu_k=1$ for all $k \in [K]$. The final equality follows from the observation that if $a_t < i$ then the distribution of the filtered observations is identical under $\mathbb{P}_{equ}$ and $\mathbb{P}_i$. Decomposing on the sum over $k$, with the observation that for $j > i+1$ the CIF parameters under the ``single good arm" and ``all arms equal" constructions will also match, meaning  $KL(\gamma_k(\Lambda_j^{equ}-\Lambda_{j-1}^{equ}),\gamma_k(\Lambda_k^{(i)}-\Lambda_{j-1}^{(i)}))=0,$ for any  $j > i+1 $ we have
\begin{align}
&\enspace KL(\mathbb{P}_{equ} ~||~ \mathbb{P}_i) \nonumber \\
&= \sum_{t=1}^T \mathbb{P}_{equ}(a_t=i)KL\Big(\gamma_i(\frac{1}{\gamma_i}-\frac{1}{\gamma_{i-1}}),\gamma_i(\frac{1+\epsilon}{\gamma_i}- \frac{1}{\gamma_{i-1}})\Big) \nonumber \\
&\quad + \sum_{t=1}^T \sum_{k=i+1}^K \mathbb{P}_{equ}\big(a_t=k\big) \sum_{j\in \{i,i+1\}} KL\Big(\gamma_k(\frac{1}{\gamma_j}-\frac{1}{\gamma_{j-1}}),\gamma_k(\Lambda_j^{(i)}-\Lambda_{j-1}^{(i)})\Big) \nonumber \\
&=\mathbb{E}_{equ}(N_i)\bigg((1-\frac{\gamma_i}{\gamma_{i-1}})\log\Big(\frac{1- \frac{\gamma_i}{\gamma_{i-1}}}{1+\epsilon- \frac{\gamma_i}{\gamma_{i-1}}}  \Big)+ \epsilon \bigg) \nonumber\\
&\quad + \sum_{k=i+1}^K \mathbb{E}_{equ}(N_k)\Bigg[\bigg((\frac{\gamma_k}{\gamma_i}-\frac{\gamma_k}{\gamma_{i-1}})\log\Big(\frac{\frac{1}{\gamma_i}- \frac{1}{\gamma_{i-1}}}{\frac{1+\epsilon}{\gamma_i}- \frac{1}{\gamma_{i-1}}}  \Big)+ \epsilon\frac{\gamma_k}{\gamma_i} \bigg)  \nonumber \\ 
&\quad \quad \quad \quad \quad \quad \quad \quad \quad \quad + \bigg((\frac{\gamma_k}{\gamma_{i+1}}-\frac{\gamma_k}{\gamma_i})\log\Big(\frac{\frac{1}{\gamma_{i+1}}-\frac{1}{\gamma_i}}{\frac{1}{\gamma_{i+1}}-\frac{1+\epsilon}{\gamma_i}} \Big)- \epsilon\frac{\gamma_k}{\gamma_i} \bigg)\Bigg] \nonumber \\
&\leq \mathbb{E}_{equ}(N_i)\frac{\gamma_{i-1}}{2(\gamma_{i-1}-\gamma_i)}\epsilon^2 \nonumber \\
&\quad +\sum_{k=i+1}^K \mathbb{E}_{equ}(N_k) \frac{\gamma_k}{\gamma_i}\Bigg[{\frac{\gamma_{i-1}-\gamma_i}{\gamma_{i-1}}}\log\bigg(\frac{1}{1+ \frac{\gamma_{i-1}}{\gamma_{i-1}-\gamma_i}\epsilon}\bigg) + {\frac{\gamma_i-\gamma_{i+1}}{\gamma_{i+1}}}\log\bigg(\frac{1}{1- \frac{\gamma_{i+1}}{\gamma_i-\gamma_{i+1}}\epsilon}\bigg) \Bigg], \label{eq::refinedKL}
\end{align} for $\epsilon \leq \frac{\gamma_i-\gamma_{i+1}}{\gamma_{i+1}}$. The inequality uses the identity \begin{equation*}
    2ax + 2a^2\log\left(\frac{a}{a+x}\right) \leq x^2, \enspace x>0,~ a<1.
\end{equation*} 

It remains to bound the summation in \eqref{eq::refinedKL} with an $o(\epsilon^2)$ term. For general $a \in [0,1]$, $b \in [0,1]$, and $0 \leq x \leq b$, consider the function \begin{displaymath}
g(x)= a\log\bigg(\frac{1}{1+\frac{x}{a}}\bigg) + b\log\bigg(\frac{1}{1+\frac{x}{b}}\bigg).
\end{displaymath} We have its derivative \begin{displaymath}
\frac{dg(x)}{dx} = \frac{a}{a-x}+ \frac{b}{x-b},
\end{displaymath} and thus for some $C>1$ we have the following linear bound on the derivative \begin{equation}
dg/dx \leq 2Cx, \quad \forall ~ x \in \bigg[0, \frac{a+b}{2} + \sqrt{\frac{(a+b)^2}{4}-\frac{ab+(a-b)}{4C}}\bigg]. \label{eq::range}
\end{equation} Solutions to $g(x)=Cx^2$ are not available in closed-form, but since $g(0)=0$, and $dg/dx|_{x=0}=0$ we have as a minimum that $g(x) \leq Cx^2$ for $x$ as in \eqref{eq::range}. Choosing $C=\frac{ab+a-b}{(a+b)^2}$ gives $g(x) \leq Cx^2$ for $x \in [0,\frac{a+b}{2}]$.

It therefore follows that \begin{align}
   &\enspace {\frac{\gamma_{i-1}-\gamma_i}{\gamma_{i-1}}}\log\bigg(\frac{1}{1+ \frac{\gamma_{i-1}}{\gamma_{i-1}-\gamma_i}\epsilon}\bigg) + {\frac{\gamma_i-\gamma_{i+1}}{\gamma_{i+1}}}\log\bigg(\frac{1}{1- \frac{\gamma_{i+1}}{\gamma_i-\gamma_{i+1}}\epsilon}\bigg) \nonumber\\
   &\quad \quad \quad \leq \Bigg(\frac{\gamma_i(\gamma_{i-1}-2\gamma_i+\gamma_{i+1})}{\frac{\gamma_{i+1}}{\gamma_{i-1}}(\gamma_{i-1}-\gamma_i)^2 + 2(\gamma_{i-1}-\gamma_i)(\gamma_i-\gamma_{i+1})+\frac{\gamma_{i-1}}{\gamma_{i+1}}(\gamma_i-\gamma_{i+1})^2}\Bigg)\epsilon^2, \label{eq::quadraticbound}
\end{align} for all $x \in \big[0, \frac{\gamma_i}{2\gamma_{i+1}}-\frac{\gamma_i}{2\gamma_{i-1}}\big]$.

Combining \eqref{eq::refinedKL} and \eqref{eq::quadraticbound} we therefore have that the KL divergence from $\mathbb{P}_{equ}$ to $\mathbb{P}_i$ may be bounded as follows, \begin{equation}
KL(\mathbb{P}_{equ} ~ ||~ \mathbb{P}_i) \leq  \epsilon^2\left(\frac{\gamma_{i-1}}{2(\gamma_{i-1}-\gamma_i)}\mathbb{E}_{equ}(N_i)+  C(\gamma_{i-1},\gamma_i,\gamma_{i+1}) \sum_{k=i+1}^K \gamma_k\mathbb{E}_{equ}(N_k), \right) \label{eq::finalKLbound}
\end{equation} for $\epsilon \leq \frac{\gamma_i}{2\gamma_{i+1}}-\frac{\gamma_i}{2\gamma_{i-1}}$,  where 
$C(\gamma_{i-1},\gamma_i,\gamma_{i+1})$ is a known positive constant. Finally, as $N_i: \mathbb{N}^{K\times T} \rightarrow [0,T]$, we have the stated result by the combination of \eqref{eq::functionbound}, and \eqref{eq::finalKLbound}. 

\end{proof}

\section{Experiments} \label{sec::experiments}
In this section we illustrate the performance of CIF-UCB via numerical examples. We work with a linear intensity function $\lambda(x) = 20-20x$ and exponential filtering probability $\gamma(x)=\exp(-x)$, both  for $x\in [0,1]$. The plot of $\Lambda(x)\gamma(x)$ is shown in Figure \ref{fig:lambda-gamma}, with $x^*=0.586$ and $\Lambda(x^*)\gamma(x^*)=4.61$ (found numerically). In the experiment, we set the Lipschitz constant $m=20$, which equals $\max_{0\leq x\leq 1}(\Lambda(x)\gamma(x))'$ (since $\Lambda(x)\gamma(x)$ is concave), and $\lambda_{\max}=20$. 

\begin{figure}[h]
    \begin{center}
    \includegraphics[trim =0 1.5cm 0 0.75cm, width=10cm, height=10cm]{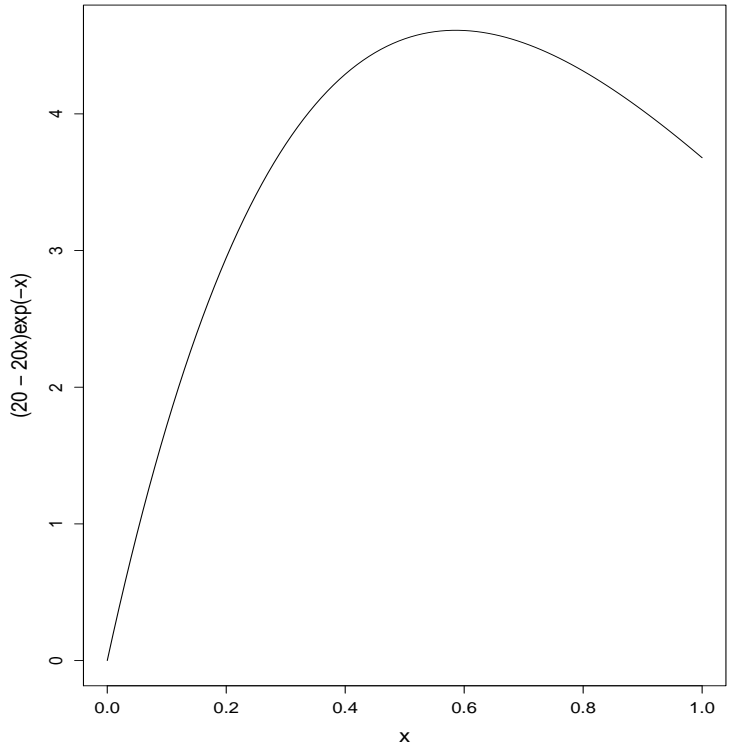}
    \end{center}
    \caption{Plot of $\Lambda(x)\gamma(x)$.}
    \label{fig:lambda-gamma}
\end{figure}

We ran 100 independent sample paths over a time horizon of $T=50000$, and computed the average cumulative regret over the 100 sample paths. The resulting average cumulative regret is shown in Figure \ref{fig:regret-base}, along with the upper regret bound, as determined in Theorem \ref{thm:upperbound}.

\begin{figure}[h]
    \begin{center}
    \includegraphics[trim =0 1.5cm 0 0.75cm, width=10cm, height=10cm]{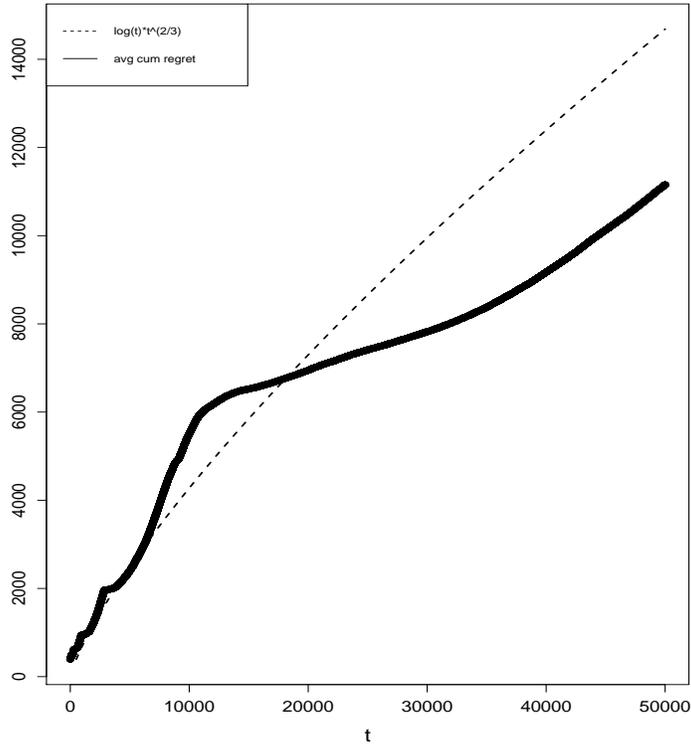}
    \end{center}
    \caption{Plot of average cumulative regret.}
    \label{fig:regret-base}
\end{figure}

Several observations are in order. First, the dotted curve in Figure \ref{fig:regret-base} doesn't include the constant terms (equal to 360 in this case) nor the sub $\log(t) t^{2/3}$ terms that come up in the regret upper bound derivation (cf. Eq. (\ref{eq:ult1})). Still, we note that the regret growth is plausibly of order  $\tilde O(t^{2/3})$. 

The second observation concerns the shape of the average cumulative regret. Note that the cumulative regret appears to be piece-wise convex increasing, such that the regret of each extra convex piece grows at a slower rate; this observation is even more noticeable on individual sample paths (not shown). This growth pattern is due to the splitting condition of CIF-UCB, whereby the algorithm initially samples the best of the two segments that result from a split, and explores other (typically worse) segments as $t$ gets larger. As $t$ grows, the algorithm exploits more often, and thus each convex piece grows slower. 

The final observation is about the splitting pattern. We include in Table \ref{table:dataframe} the data frame for the final round of a sample path in the R implementation, which includes the two endpoints ($x$ and $y$), the effective number of samples of each final segment $\sum_{i=1}^{|V_{T}(x,y)|}\gamma(b_{\tau_i})$, the index $\mathcal I_{T}(x,y)$, and the CIF estimator $\bar \Lambda_{T}(y)$ in the rightmost column. The finer spatial grid around $x^*$ is appreciable, suggesting that the algorithm gravitates towards the segment that contains the optimal solution $x^*$. Note also that the estimates of $\Lambda(x) = 20 x -10 x^2$ are very precise (the largest relative error is .62\% for $y$ large, since  the segments close to 1 have the fewest number of effective samples $\sum_{i=1}^{|V_{T}(x,y)|}\gamma(b_{\tau_i})$). The index values are similar across the final segments, as is typical with UCB algorithms, and the effective number of samples drops off significantly to the right of $x^*$. On the other hand, the effective number of samples to the left of $x^*$ is large, since the algorithm needs to cover that space to reach (and exploit) the neighborhood around $x^*$.

\begin{table}[ht]   
\tiny
\begin{center}
\begin{tabular}{|c|c|c|c|c|}
  \hline
   $x$ &   $y$ & $\sum_{i=1}^{|V_{T}(x,y)|}\gamma(b_{\tau_i})$ &       $\mathcal I_{T}(x,y)$ & $\bar \Lambda_T(y)$  \\ 
  \hline
0.0000000 &  0.1250000 &  24118.996 & 4.740275 & 2.348149 \\
0.1250000 &  0.1875000 &  24118.996 & 4.225897 & 3.398939 \\
0.1875000 &  0.2500000 &  24118.996 & 4.802038 & 4.370248 \\
0.2500000 &  0.2812500 &  24117.438 & 4.413122 & 4.827627 \\
0.2812500 &  0.3125000 &  24117.438 & 4.615670 & 5.263785 \\
0.3125000 &  0.3437500 &  24116.707 & 4.794795 & 5.689541 \\
0.3437500 &  0.3750000 &  24116.707 & 4.943961 & 6.093245 \\
0.3750000 &  0.3906250 &  24115.332 & 4.692382 & 6.282144 \\
0.3906250 &  0.4062500 &  24115.332 & 4.747511 & 6.466840 \\
0.4062500 &  0.4218750 &  24114.666 & 4.797825 & 6.648402 \\
0.4218750 &  0.4375000 &  24114.666 & 4.843334 & 6.826593 \\
0.4375000 &  0.4531250 &  24113.375 & 4.882826 & 6.999228 \\
0.4531250 &  0.4687500 &  24113.375 & 4.916710 & 7.166604 \\
0.4687500 &  0.4843750 &  24112.749 & 4.946188 & 7.330313 \\
0.4843750 &  0.4921875 &  24112.749 & 4.803775 & 7.411888 \\
0.4921875 &  0.5000000 &  24112.749 & 4.814192 & 7.488694 \\
0.5000000 &  0.5078125 &  23996.902 & 4.825221 & 7.566727 \\
0.5078125 &  0.5156250 &  23996.902 & 4.834771 & 7.643570 \\
0.5156250 &  0.5234375 &  23996.304 & 4.845463 & 7.723064 \\
0.5234375 &  0.5312500 &  23996.304 & 4.852431 & 7.796992 \\
0.5312500 &  0.5390625 &  23995.129 & 4.858238 & 7.869597 \\
0.5390625 &  0.5468750 &  23995.129 & 4.861613 & 7.938653 \\
0.5468750 &  0.5546875 &  23994.550 & 4.865409 & 8.009027 \\
0.5546875 &  0.5625000 &  23994.550 & 4.868062 & 8.078001 \\
0.5625000 &  0.5703125 &  23875.465 & 4.869991 & 8.145726 \\
0.5703125 &  0.5781250 &  23875.465 & 4.870570 & 8.212154 \\
0.5781250 &  0.5859375 &  23726.253 & 4.870049 & 8.276444 \\
0.5859375 &  0.5937500 &  23726.253 & 4.869350 & 8.341604 \\
0.5937500 &  0.6015625 &  23314.273 & 4.869802 & 8.407425 \\
0.6015625 &  0.6093750 &  23314.273 & 4.867215 & 8.470133 \\
0.6093750 &  0.6171875 &  21772.366 & 4.866196 & 8.528425 \\
0.6171875 &  0.6250000 &  21772.366 & 4.861871 & 8.588823 \\
0.6250000 &  0.6328125 &  20964.657 & 4.860349 & 8.650464 \\
0.6328125 &  0.6406250 &  20964.657 & 4.854130 & 8.708132 \\
0.6406250 &  0.6484375 &  18021.041 & 4.851132 & 8.753434 \\
0.6484375 &  0.6562500 &  18021.041 & 4.843125 & 8.808425 \\
0.6562500 &  0.6640625 &  16849.088 & 4.842293 & 8.868670 \\
0.6640625 &  0.6718750 &  16849.088 & 4.833623 & 8.923094 \\
0.6718750 &  0.6796875 &  14015.447 & 4.831297 & 8.963610 \\
0.6796875 &  0.6875000 &  14015.447 & 4.820711 & 9.014911 \\
0.6875000 &  0.6953125 &  11653.144 & 4.823237 & 9.062533 \\
0.6953125 &  0.7031250 &  11653.144 & 4.811218 & 9.111618 \\
0.7031250 &  0.7187500 &   7729.984 & 4.954881 & 9.151894 \\
0.7187500 &  0.7343750 &   6535.462 & 4.945111 & 9.240815 \\
0.7343750 &  0.7500000 &   6535.462 & 4.910583 & 9.319769 \\
0.7500000 &  0.7656250 &   6072.071 & 4.885203 & 9.399100 \\
0.7656250 &  0.7812500 &   6072.071 & 4.848842 & 9.474527 \\
0.7812500 &  0.7968750 &   5608.286 & 4.820870 & 9.546411 \\
0.7968750 &  0.8125000 &   5608.286 & 4.778192 & 9.607749 \\
0.8125000 &  0.8281250 &   4692.391 & 4.770177 & 9.693565 \\
0.8281250 &  0.8437500 &   4692.391 & 4.723798 & 9.746417 \\
0.8437500 &  0.8593750 &   3776.720 & 4.714856 & 9.810630 \\
0.8593750 &  0.8750000 &   3776.720 & 4.664373 & 9.853260 \\
0.8750000 &  0.9062500 &   2290.606 & 4.954441 & 9.899998 \\
0.9062500 &  0.9375000 &   1757.278 & 4.886901 & 9.906228 \\
0.9375000 &  0.9687500 &   1236.443 & 4.740351 & 9.937378 \\
0.9687500 &  1.0000000 &   1236.443 & 4.746599 & 9.954363 \\

 \hline
\end{tabular}
\caption{Summary of main parameters after a sample path}
\end{center}
 \label{table:dataframe}
\end{table}

To test the sensitivity of the algorithm to multiple local maximums, we ran a second experiment with parameters identical to those of the first experiment, except for the filtering probability $\gamma(\cdot)$, which now is set to be piece-wise linearly decreasing, 
\[
\gamma(x) = \begin{cases}
  1, \text{ for }x \in [0, 0.25)\\      
  1.5 - 2x  \text{ for }x \in [0.25, 0.5) \\
  0.5, \text{ for }x \in [0.5, 0.8)\\   
  1.3 - x  \text{ for }x \in [0.8, 1]. 
\end{cases}
\]
This filtering probability leads to a $\Lambda(x)\gamma(x)$ objective as in Figure \ref{fig:lambda-gamma2}, with $x^*=0.8$ and $\Lambda(x^*)\gamma(x^*)=4.8$.

\begin{figure}[h]
    \begin{center}
    \includegraphics[trim =0 1.5cm 0 0.75cm, width=10cm, height=10cm]{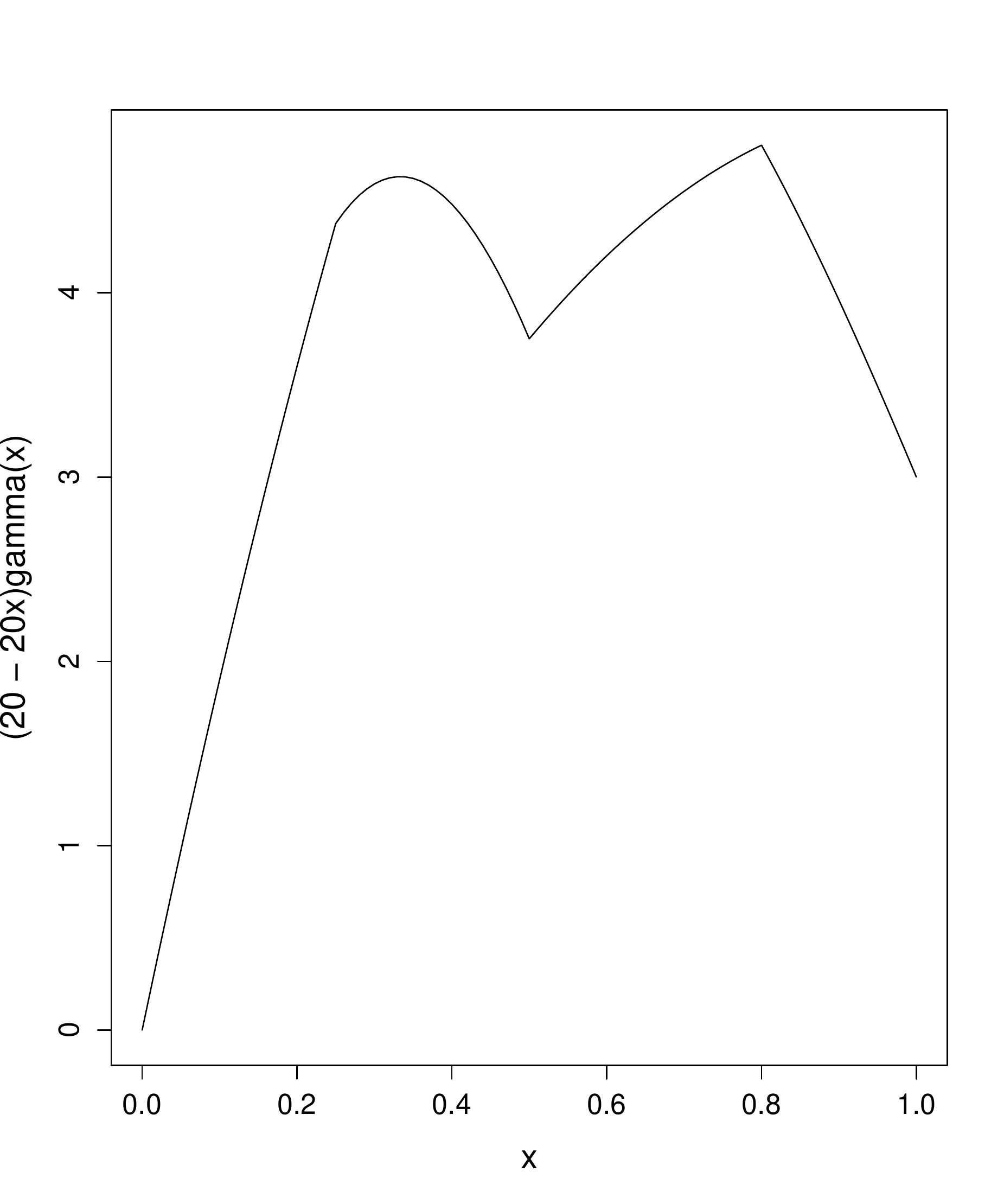}
    \end{center}
    \caption{Plot of $\Lambda(x)\gamma(x)$.}
    \label{fig:lambda-gamma2}
\end{figure}

We tested CIF-UCB over 100 independent sample paths, with a time horizon $T=50000$. This resulted in an average cumulative regret as shown in Figure \ref{fig:regret-base2}. 

\begin{figure}[h]
    \begin{center}
    \includegraphics[trim =0 1.5cm 0 0.75cm, width=10cm, height=10cm]{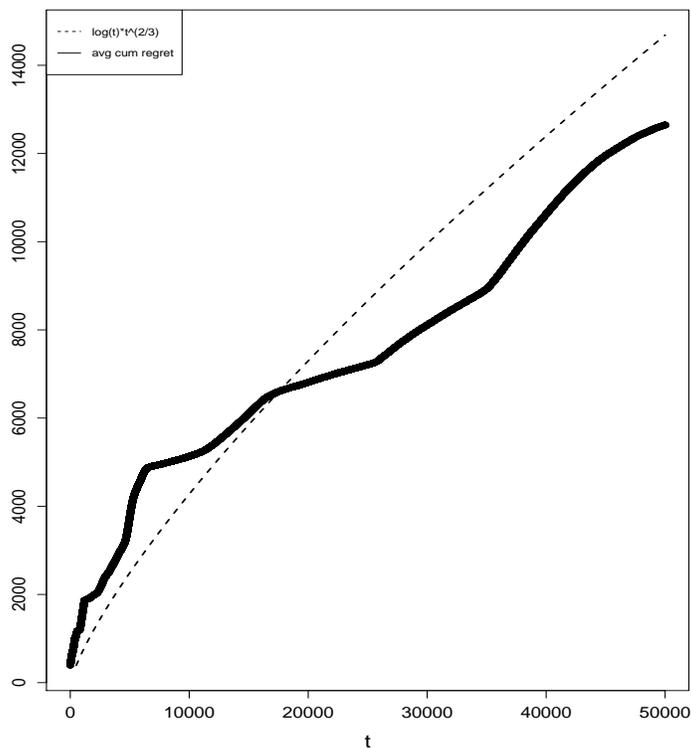}
    \end{center}
    \caption{Plot of average cumulative regret.}
    \label{fig:regret-base2}
\end{figure}

Two main observations can be drawn. First, the $\tilde O(T^{2/3})$ upper bound of Theorem \ref{thm:upperbound} holds over $t \in \{1,\ldots, T\}$. Second, the average cumulative regret is about 10\% larger than in the first experiment for $t=T$. This can be ascribed to the fact that the optimal value of the objective function is 4.8 versus 4.61 in the first experiment, and to the extra exploration induced by the local maximum at $x=.33$. 

\section{Discussion} \label{sec::discussion}

This work considers a sequential variant of the problem faced by a decision-maker who attempts to maximise the detection of events generated by a filtered non-homogeneous Poisson process, where the filtering probability depends on the segment selected by the decision-maker, and the Poisson cumulative intensity function is unknown. The independent increment property of the Poisson process makes the analysis tractable, enabling the use of the machinery developed for the continuum bandit problem. 
The problem of efficient exploration/exploitation of a filtered Poisson process on a continuum arises naturally in settings where observations are made by searchers (representing cameras, sensors, robotic and human searchers, etc.), and the events that generate observations tend to disappear (or renege, in a queueing context), before an observation can be made, as the interval of search increases. Besides extending the state-of-the-art to such settings, the main contributions are an algorithm for a filtered Poisson process on a continuum, and regret bounds that are optimal up to a logarithmic factor.

\noindent \textbf{Acknowledgements} JAG was supported by EPSRC grant EP/L015692/1 (STOR-i Centre for Doctoral Training). RS was supported by ONR grant N0001420WX00860.

\allowdisplaybreaks

\appendix

\bibliography{main}

\begin{thebibliography}{}

\bibitem[Agrawal, 1995]{agrawal1995continuum}
Agrawal, R. (1995).
\newblock The continuum-armed bandit problem.
\newblock {\em SIAM journal on control and optimization}, 33(6):1926--1951.

\bibitem[Auer et~al., 2002]{auer2002nonstochastic}
Auer, P., Cesa-Bianchi, N., Freund, Y., and Schapire, R.~E. (2002).
\newblock The nonstochastic multiarmed bandit problem.
\newblock {\em SIAM journal on computing}, 32(1):48--77.

\bibitem[Bubeck et~al., 2013]{bubeck2013bandits}
Bubeck, S., Cesa-Bianchi, N., and Lugosi, G. (2013).
\newblock Bandits with heavy tail.
\newblock {\em IEEE Transactions on Information Theory}, 59(11):7711--7717.

\bibitem[Bubeck et~al., 2011a]{bubeck2011x}
Bubeck, S., Munos, R., Stoltz, G., and Szepesv{\'a}ri, C. (2011a).
\newblock X-armed bandits.
\newblock {\em Journal of Machine Learning Research}, 12(May):1655--1695.

\bibitem[Bubeck et~al., 2011b]{bubeck2011lipschitz}
Bubeck, S., Stoltz, G., and Yu, J.~Y. (2011b).
\newblock Lipschitz bandits without the lipschitz constant.
\newblock In {\em International Conference on Algorithmic Learning Theory},
  pages 144--158. Springer.

\bibitem[Cover and Thomas, 2012]{cover2012elements}
Cover, T.~M. and Thomas, J.~A. (2012).
\newblock {\em Elements of information theory}.
\newblock John Wiley \& Sons.

\bibitem[Grant et~al., 2019]{grant2019adaptive}
Grant, J.~A., Boukouvalas, A., Griffiths, R.-R., Leslie, D.~S., Vakili, S., and
  De~Cote, E.~M. (2019).
\newblock Adaptive sensor placement for continuous spaces.
\newblock {\em International Conference on Machine Learning}.

\bibitem[Grant et~al., 2020]{grant2020adaptive}
Grant, J.~A., Leslie, D.~S., Glazebrook, K., Szechtman, R., and Letchford,
  A.~N. (2020).
\newblock Adaptive policies for perimeter surveillance problems.
\newblock {\em European Journal of Operational Research}, 283:265--278.

\bibitem[Kleinberg et~al., 2008]{kleinberg2008multi}
Kleinberg, R., Slivkins, A., and Upfal, E. (2008).
\newblock Multi-armed bandits in metric spaces.
\newblock In {\em Proceedings of the fortieth annual ACM symposium on Theory of
  computing}, pages 681--690.

\bibitem[Kleinberg, 2005]{kleinberg2005nearly}
Kleinberg, R.~D. (2005).
\newblock Nearly tight bounds for the continuum-armed bandit problem.
\newblock In {\em Advances in Neural Information Processing Systems}, pages
  697--704.

\bibitem[Lu et~al., 2019]{lu2019optimal}
Lu, S., Wang, G., Hu, Y., and Zhang, L. (2019).
\newblock Optimal algorithms for lipschitz bandits with heavy-tailed rewards.
\newblock In {\em International Conference on Machine Learning}, pages
  4154--4163.

\bibitem[Srinivas et~al., 2010]{srinivas2010gaussian}
Srinivas, N., Krause, A., Kakade, S., and Seeger, M. (2010).
\newblock Gaussian process optimization in the bandit setting: No regret and
  experimental design.
\newblock In {\em Proceedings of the 27th International Conference on Machine
  Learning}. Omnipress.

\end{thebibliography}
\bibliographystyle{apalike}

\end{document}